\documentclass{article}
\PassOptionsToPackage{numbers,compress}{natbib}
\usepackage[preprint]{neurips_2026}
\usepackage[utf8]{inputenc}
\usepackage[T1]{fontenc}    
\usepackage[colorlinks=true, citecolor=RoyalBlue, linkcolor=RoyalBlue, urlcolor=RoyalBlue]{hyperref}

\usepackage{url}            
\usepackage{booktabs}       
\usepackage{makecell}       
\usepackage{amsfonts}       
\usepackage{nicefrac}       
\usepackage{microtype}     
\usepackage[table,dvipsnames]{xcolor} 
\usepackage{graphicx}       
\usepackage{tikz}
\usetikzlibrary{calc}
\usepackage{amsthm}
\usepackage{amsmath}
\usepackage{enumitem}
\usepackage{tabularx}
\usepackage{placeins}
\newtheorem{proposition}{Proposition}
\newcolumntype{P}{>{\hsize=0.65\hsize\centering\arraybackslash}X}
\newcolumntype{R}{>{\hsize=1.05\hsize\hspace{0.85em}\centering\arraybackslash}X}
\newcolumntype{Q}{>{\hsize=1.075\hsize\centering\arraybackslash}X}
\definecolor{Accent}{HTML}{146B8F}
\colorlet{Accent15}{Accent!15!white}
\definecolor{GainRowGray}{HTML}{E8E8E8}
\definecolor{GainGreen}{HTML}{0D5C2E}
\definecolor{GainRed}{HTML}{8B1A1A}
\newcommand{\sectionrule}{\specialrule{0.6pt}{0pt}{0pt}}
\newcommand{\doublemidrule}{
  \specialrule{\heavyrulewidth}{3pt}{1.5pt}
  \specialrule{\heavyrulewidth}{0pt}{3pt}
}
\newcommand{\taberr}[1]{\mbox{\normalfont\scalebox{0.72}{$\pm#1$}}}
\newcommand{\tcgamart}[1]{\cellcolor{Accent15}#1}
\newcommand{\relgaindash}[1]{
  \specialrule{0.5pt}{2.5pt}{2.5pt}
}
  
\setlist[itemize]{leftmargin=1.35em,itemsep=0.2em,topsep=0.3em}
\title{Martingale-Consistent Self-Supervised Learning}

\author{
Moritz Gögl$^{1}$ \quad Hanwen Xing$^{1}$ \quad Christopher Yau$^{1,2}$ \\
$^1$University of Oxford\\
$^2$Health Data Research UK\\
\texttt{moritz.gogl@keble.ox.ac.uk}
}

\begin{document}  
\maketitle
\begin{abstract}
Self-supervised learning (SSL) is often deployed under changing information, such as shorter histories, missing features, or partially observed images. In these settings, predictions from coarse and refined views should be coherent: before refinement, the coarse-view prediction should match the average prediction expected after refinement. Martingales formalize this coherence principle, but standard SSL objectives do not enforce it. Unlike invariance objectives that pull views together, martingale consistency constrains only the expected refined prediction, allowing predictions to update as information is revealed while preventing systematic drift. We introduce a martingale-consistent SSL framework that closes this gap, with practical prediction- and latent-space variants and an unbiased two-sample Monte Carlo estimator based on stochastic refinement. We evaluate the approach on synthetic and real time-series, tabular, and image benchmarks under partial-observation regimes, in both semi-self-supervised and fully label-free settings. Across these experiments, our framework improves robustness and calibration under partial observation, yielding more stable representations as information is revealed.
\end{abstract}

\section{Introduction}
\label{sec:intro}
Self-supervised learning (SSL) has become a standard approach for learning reusable representations from unlabeled data \cite{Uelwer2025_SSLSurvey,Chen2020_SimCLR,Grill2020_BYOL,He2022_MAE}. However, while SSL has been studied through probabilistic lenses such as mutual information maximisation, latent variable modeling, and masked conditional prediction, existing SSL objectives do not typically treat learned predictors as a coherent family of conditional expectations indexed by nested information sets. In practice, a learned predictor is often queried under multiple information sets corresponding to shorter or longer histories, coarser or finer contexts, delayed or missing features, or different acquisition budgets. A single model is then expected to behave coherently as information is revealed, withheld, or refined.

\paragraph{Illustrative Example:}
Suppose a model predicts whether a patient will develop sepsis within the next 24 hours $(Y \in \{0,1\})$. The input $x$ consists of clinical measurements including temperature, heart rate, blood pressure, white blood cell count, and lactate level. Different information sets correspond to different stages of data availability during care. For example, there could be three nested observation sets: (i) $F_1$: vital signs only, (ii) $F_2$: vital signs + routine laboratory tests, and (iii) $F_M$: all available measurements, satisfying $F_1 \subseteq F_2 \subseteq F_M$. A predictor $g$ may estimate sepsis risk as $g(x_{F_1}) = 0.30$ from vital signs alone and revise this estimate after laboratory results become available. Before the laboratory tests are observed, we would rationally expect that the refined prediction conditioned on the currently available information equals the coarse prediction:
\begin{equation*}
\mathbb{E}[g(x_{F_2}) \mid x_{F_1}] = g(x_{F_1}).
\end{equation*}
This is the \emph{martingale property}.\nopagebreak

Intuitively, the refined prediction may increase or decrease substantially for a particular patient once laboratory results arrive. However, before observing those tests, the model should not systematically expect its future predictions to drift upward or downward. If the conditional expectation instead equals $0.45\!\neq \!0.30$, then the model is already implicitly underestimating risk at the coarse-information \nolinebreak stage.

In this paper, we refer to the enforcement of this martingale property in SSL as \emph{martingale consistency} \cite{Williams1991_ProbabilityWithMartingales}. Standard SSL objectives do not enforce this consistency \cite{Chen2020_SimCLR,Grill2020_BYOL,He2022_MAE} and can therefore learn predictors that perform well for individual views while remaining globally inconsistent across nested information sets. We develop a probabilistic view of SSL under changing information in which predictors form a family of conditional expectations indexed by information structure, and we enforce coherence across information refinement during training. 

\paragraph{Contributions.} Our contributions are threefold: (1) we identify martingale consistency as a defining–yet missing–structural property of SSL viewed as a conditional expectation family: standard SSL applies a single predictor across many conditioning sets without enforcing the consistency relations required of a valid conditional expectation operator, and we introduce this property as a training criterion; (2) we provide a practical training recipe – prediction-space and latent-space martingale penalties with an unbiased two-sample Monte Carlo estimator – that closes this gap without modality-specific redesign; and (3) across temporal, tabular, and image benchmarks in both semi-self-supervised and fully self-supervised settings, we demonstrate that closing this gap improves downstream robustness, probabilistic calibration, and representation stability under partial \nolinebreak observation.

\section{Related Work}
\paragraph{SSL across objectives and modalities.}
SSL has developed along several main objective families. In vision, contrastive and bootstrap methods such as SimCLR~\cite{Chen2020_SimCLR} and BYOL~\cite{Grill2020_BYOL} learn representations by aligning augmented views, while masked reconstruction methods such as MAE learn from incomplete inputs by predicting masked content~\cite{He2022_MAE}. In time series, TS2Vec~\cite{Yue2022_TS2Vec} uses hierarchical contrastive learning, while Ti-MAE~\cite{Li2023_TiMAE} and SimMTM~\cite{Dong2023_SimMTM} adopt masked modeling objectives. In tabular learning, VIME~\cite{Yoon2020_VIME}, SCARF~\cite{Bahri2022_SCARF}, and SubTab~\cite{Ucar2021_SubTab} adapt self-supervision through corruption-based pretext tasks, contrastive learning, and feature-subset views, respectively.

\paragraph{Consistency across views.}
Many SSL methods rely on agreement across multiple views, but typically through invariance or representation alignment. SimCLR~\cite{Chen2020_SimCLR} and BYOL~\cite{Grill2020_BYOL} encourage similar representations across augmentations, and VICReg~\cite{Bardes2022_VICReg} makes this explicit through invariance, variance, and covariance regularization. Masked modeling methods likewise couple views through reconstruction~\cite{He2022_MAE,Li2023_TiMAE,Dong2023_SimMTM}. These constraints ask that views resemble each other; they do not constrain how predictions update as information is refined. Martingale consistency instead enforces a directional identity - the coarse prediction equals the expected refined prediction - which is the defining relation of a conditional expectation family. It is complementary to existing SSL objectives rather than a replacement: as we will demonstrate in our experiments, adding it on top of SimCLR and BYOL improves downstream robustness over the vanilla objectives, indicating it captures structure these objectives leave unconstrained.

\section{Conditional Coherence in Self-Supervised Learning}
\label{sec:methods}
\subsection{Problem setup}
Let $(X_t)_{t=1}^T$ be a stochastic process with joint distribution $p(x_{1:T})$. We aim to learn representations of partial histories that are useful for predicting future variables under arbitrary conditioning structures. Let $\mathcal{F}$ denote the information available through a particular observed view (e.g., a masked or subsampled history). For a target random variable $Y = h(X_{t+1:t+k})$, the Bayes-optimal predictor given $\mathcal{F}$ is the conditional expectation
$m_{\mathcal{F}} := E[Y | \mathcal{F}]$. 

A fundamental property of conditional expectations is the \emph{martingale property}: for any $\mathcal{F}_1 \subseteq \mathcal{F}_2$,
\begin{equation}
\label{eq:martingale}
E[m_{\mathcal{F}_2} | \mathcal{F}_1] = m_{\mathcal{F}_1}.
\end{equation}
This property is illustrated in Figure~\ref{fig:martingale}. It states that averaging the refined-view prediction over the information not yet observed in $\mathcal{F}_1$ recovers the prediction based only on $\mathcal{F}_1$. Intuitively, before observing any additional information, our expectation of how the prediction will change after refinement should be unchanged. A predictor that adheres to this will be called \emph{martingale consistent}.

\begin{figure}[t]
\centering
\vspace{-0.2cm}
\includegraphics[width=1.0\textwidth]{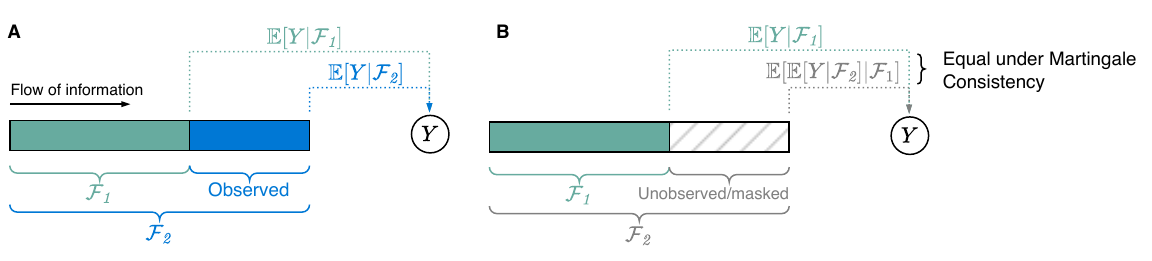}
\vspace{-0.5cm}
\caption{Illustration of the Martingale Property. (A)~The prediction for target $Y$ is updated from available information. (B) The martingale property says the conditional expectation of $Y$ given no further information should agree with that based only on what is observed. Consistency is achieved when this is true across all nested pairs $(\mathcal{F}_1, \mathcal{F}_2)$.}
\vspace{-0.15cm}
\label{fig:martingale}
\end{figure}

\subsection{Application to self-supervision}
Standard SSL methods can be interpreted as learning predictors under a particular conditioning structure \cite{Chen2020_SimCLR,Grill2020_BYOL,He2022_MAE}. Let $Y$ denote a target random variable associated with $X$, either externally provided or derived from $X$ itself. Each training objective approximates a conditional expectation \nolinebreak of \nolinebreak the  form
\begin{equation}
g_\phi(x_{\mathcal{F}}) \approx E[Y \mid \mathcal{F}],
\end{equation}
where $\mathcal{F}$ is induced by a masking scheme, subsampling rule, or causal history on $X$.

After training, however, the same predictor $g_\phi$ is typically applied across many partial-observation patterns $x_{\mathcal{F}}$. Implicitly, this treats $g_\phi$ as a family of conditional predictors indexed by $\mathcal{F}$. Yet standard SSL objectives train each conditioning independently and do not enforce coherence across nested information sets. Even when training exposes the model to a distribution over masks or partial-information patterns, the learning problem still decomposes across observed views, and predictions made under different views are not explicitly coupled. Our proposal is to impose the martingale identity directly as a training objective.

From a representation-learning perspective, this acts as a structural inductive bias: by penalizing incoherence across masks, subsampling schemes, or temporal scales, the model is encouraged to learn representations that remain stable under information marginalization, improving robustness under changing observation patterns.

\subsection{Martingale-consistent objective}
We now formalize the martingale-consistency objective underlying our approach. Let $P(\mathcal{F})$ be the distribution over information patterns induced by masks, subsampling, or history-selection rules. Each draw $\mathcal{F}\sim P$ corresponds to a partial observation $x_{\mathcal{F}}$. We can then learn a predictor
\begin{equation}
g_\phi(x_{\mathcal{F}}) \approx E[Y | \mathcal{F}],
\end{equation}
parameterized by $\phi$, typically implemented as a neural network head on top of a shared encoder.

\paragraph{Predictive term.} We include a standard self-supervised predictive loss
\begin{equation}
\mathcal{L}_{\text{pred}} = E \left[ \ell(g_\phi(X_{\mathcal{F}}), Y) \right],
\end{equation}
where $\ell$ is mean squared error or negative log-likelihood. In purely self-supervised settings, this often specializes to the case $Y=t(X)$ for an appropriate target map $t$.

\paragraph{Martingale consistency term.} To enforce coherence across conditionings, we sample nested information sets $\mathcal{F}_1\! \subseteq\!\mathcal{F}_2$ (e.g., coarse and fine masks). We penalize deviations from the martingale \nolinebreak identity
\vspace{-0.2cm}
\begin{equation}
\mathcal{L}_{\text{mart}} = E_{\mathcal{F}_1 \subseteq \mathcal{F}_2} \left[ \left\| g_\phi(x_{\mathcal{F}_1}) - E[g_\phi(x_{\mathcal{F}_2}) | x_{\mathcal{F}_1}] \right\|^2 \right].
\end{equation}
The full objective is $\mathcal{L} = \mathcal{L}_{\text{pred}} + \lambda \mathcal{L}_{\text{mart}}.$ Taken alone, however, $\mathcal{L}_{\mathrm{mart}}$ does not identify a useful predictor. For example, any mask-independent constant predictor $g_\phi(x_{\mathcal{F}})\equiv c$ satisfies
\begin{equation}
g_\phi(x_{\mathcal{F}_1}) - E[g_\phi(x_{\mathcal{F}_2}) \mid x_{\mathcal{F}_1}]
= c - c = 0,
\end{equation}
and hence incurs zero martingale penalty despite being uninformative. More generally, replacing $g_\phi$ by $g_\phi + c$ for any constant vector $c$ leaves $\mathcal{L}_{\mathrm{mart}}$ unchanged. Consequently, $\mathcal{L}_{\mathrm{mart}}$ must be paired with a predictive objective that anchors the encoder to meaningful targets. Appendix~\ref{app:theory:linear} characterizes this inductive bias in closed form: increasing $\lambda$ progressively recovers the optimal coarse-view predictor.

\paragraph{Difference from standard consistency regularization and invariance learning.} Conventional symmetric view-alignment or consistency objectives encourage predictions or representations from different views to be directly similar, e.g. $g(x_{F_1}) \approx g(x_{F_2})$, thereby suppressing view-dependent variation. In contrast, martingale consistency constrains only the conditional expectation of refined predictions $g(x_{F_1}) \approx \mathbb{E}[g(x_{F_2}) \mid x_{F_1}]$, allowing individual refined predictions to vary substantially as additional information is revealed while preventing systematic drift in expectation. The objective therefore preserves information-sensitive updates rather than enforcing invariance across views.

\vspace{-0.10cm}
\subsection{Approximating conditional expectations}
\vspace{-0.10cm}
To make the martingale objective practical, we must approximate the conditional expectation $E[g_\phi(x_{\mathcal{F}_2})\!\mid\!x_{\mathcal{F}_1}]$. A natural strategy is to draw independent samples from a refinement mechanism that is conditionally consistent with $x_{\mathcal{F}_1}$ and use these samples to build a Monte Carlo estimator. However, replacing $E[g_\phi(x_{\mathcal{F}_2}) \mid x_{\mathcal{F}_1}]$ by a finite-sample average inside $\mathcal{L}_{\mathrm{mart}}$ introduces bias through Monte Carlo variance. We propose an \emph{unbiased two-sample Monte Carlo estimator} which draws two conditionally independent refinements $x_{\mathcal{F}_2}^{(a)}$ and $x_{\mathcal{F}_2}^{(b)}$ given $x_{\mathcal{F}_1}$, define
$u := g_\phi(x_{\mathcal{F}_1})$, $v_a := g_\phi(x_{\mathcal{F}_2}^{(a)})$, and $v_b := g_\phi(x_{\mathcal{F}_2}^{(b)})$, and set
\begin{equation}
\widehat{\mathcal{L}}_{\mathrm{mart}}^{2\mathrm{MC}}=
\left( u - v_a \right)^\top
\left( u - v_b \right).
\end{equation}
Since the two samples are independent and each has expectation $E[g_\phi(x_{\mathcal{F}_2}) \mid x_{\mathcal{F}_1}]$, taking the conditional expectation given $x_{\mathcal{F}_1}$ yields
\begin{equation}
E\left[ \widehat{\mathcal{L}}_{\mathrm{mart}}^{2\mathrm{MC}} \mid x_{\mathcal{F}_1} \right]=\left\| g_\phi(x_{\mathcal{F}_1}) - E[g_\phi(x_{\mathcal{F}_2}) \mid x_{\mathcal{F}_1}] \right\|^2.
\label{eq:2mc}
\end{equation}
Taking the further expectation over $x_{\mathcal{F}_1}$ recovers $\mathcal{L}_{\mathrm{mart}}$, so $\widehat{\mathcal{L}}_{\mathrm{mart}}^{2\mathrm{MC}}$ is an unbiased estimator of $\mathcal{L}_{\mathrm{mart}}$, not a different objective (for proof, see Appendix \ref{app:proof_unbiasedness}). Although it has higher variance than a large-sample Monte Carlo average, it remains unbiased even with a single draw per branch. 

\subsection{Latent-space martingale consistency}
\vspace{-0.1cm}
In high-dimensional settings such as images, video, or long sequences, enforcing martingale consistency directly in observation space may be impractical. Self-supervised models instead operate through an encoder that maps partial observations to lower-dimensional latent representations, so it is natural to impose consistency at that level. Formally, let $f_\theta$ be an encoder mapping a partial observation $x_{\mathcal{F}}$ to a latent representation $z_{\mathcal{F}} = f_\theta(x_{\mathcal{F}})$. This representation retains a coarsened version of the information in the input view. We consider predictors of the form
$g_\phi(z_{\mathcal{F}}) \approx E[Y | z_{\mathcal{F}}],$
and enforce martingale consistency across nested conditionings $\mathcal{F}_1 \subseteq \mathcal{F}_2$ at the level of latent representations: $E[g_\phi(z_{\mathcal{F}_2}) | z_{\mathcal{F}_1}] = g_\phi(z_{\mathcal{F}_1})$.
We posit this as a latent-space surrogate for Eq.~(\ref{eq:martingale}); it coincides exactly with the population martingale identity when $z_{\mathcal{F}_1}$ generates $\mathcal{F}_1$.
This enforces coherence with respect to the information retained by the encoder rather than the full observation space. In the reported experiments, the latent-space constraint is implemented through stochastic completion and, in EMA variants, refined-path target networks. Latent-space martingale consistency is therefore a scalable surrogate for the ideal observation-space constraint. Its guarantees are relative to the encoder: consistency is enforced only for the information preserved in the latent space, not for all information in the raw input. This trade-off is standard in modern self-supervised and masked autoencoding architectures \cite{He2022_MAE, Assran2023_iJEPA, Grill2020_BYOL}.

\section{Instantiating the Martingale-Consistent SSL Framework}
\label{sec:instantiation}
\vspace{-0.10cm}

We now translate the framework of Section~\ref{sec:methods} into a concrete training recipe. The framework requires only a nested pair $\mathcal{F}_1 \subseteq \mathcal{F}_2$, and $\mathcal{F}_2$ may be any refinement of $\mathcal{F}_1$. However, without loss of generality, in the experiments that follow, we specialize to the most natural choice under missingness: $\mathcal{F}_2 = x$ is the complete input - the maximally informative training view - and $\mathcal{F}_1 = x_{\text{obs}}$ is the masked partial view:
\begin{equation}
x_{\mathcal{F}_1} := x_{\mathrm{obs}} := x \odot M,
\qquad
x_{\mathcal{F}_2} := x.
\end{equation}
Figure~\ref{fig:exp_pipeline} depicts the resulting architecture for time-series data under right-censoring. A shared encoder $f_\theta$ and prediction head $g_\phi$, applied to the complete input, produce the base predictive loss $\mathcal{L}_{\mathrm{pred}}$, which is a classification loss in the semi-self-supervised setting and, e.g., a reconstruction loss in the label-free setting.

\begin{figure}[t!]
  \centering
\vspace{-0.2cm}
\includegraphics[width=\linewidth]{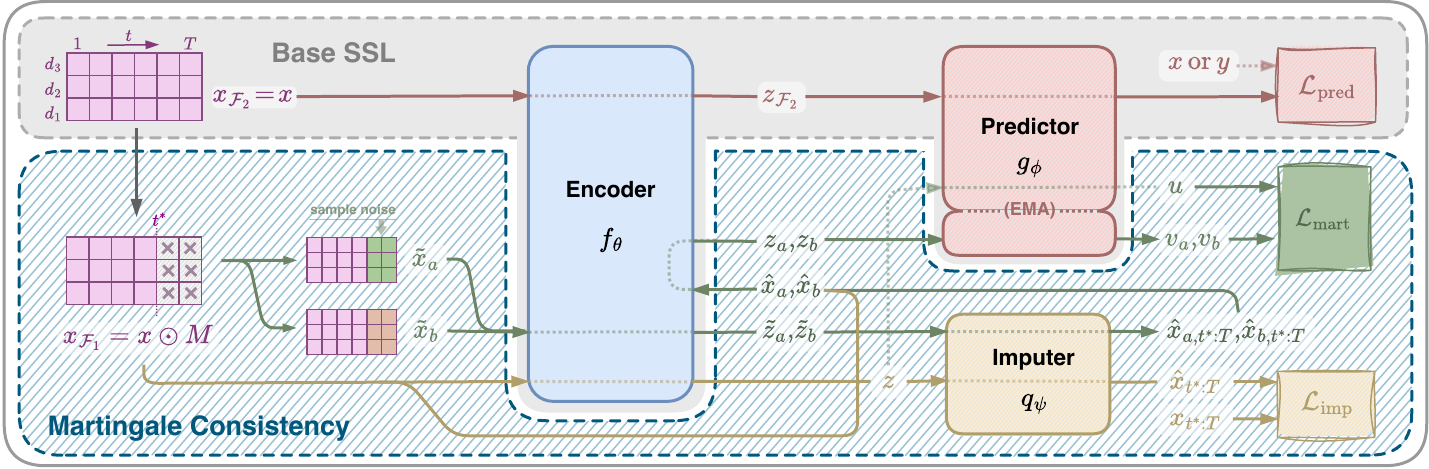}
\vspace{-0.4cm}
  \caption{Base model architecture for martingale-consistent SSL visualized for the right-censored time-series setting. The predictive branch uses the fully observed input $x_{\mathcal{F}_2}\!=\!x$, while the martingale branch starts from $x_{\mathcal{F}_1}\!=\!x_{\mathrm{obs}}\!=\!x \odot M$, forms two imputed refinements $\hat{x}_a,\hat{x}_b$ via $q_\psi$, and instantiates the martingale estimator $\widehat{\mathcal{L}}_{\mathrm{mart}}^{2\mathrm{MC}}$ in prediction or latent space; $\mathcal{L}_{\mathrm{imp}}$ is applied only on removed entries.}
  \vspace{-0.1cm}
  \label{fig:exp_pipeline}
\end{figure}

\paragraph{Imputer.} Applying Eq.~\eqref{eq:2mc} requires two draws from $p(x_{\mathcal{F}_2} \mid x_{\mathcal{F}_1})$ – the conditional distribution of the complete input given the partial view. In general this distribution is unknown, so we introduce an imputer $q_\psi$: a learned stochastic completion network that approximates $p(x_{\mathcal{F}_2} \mid x_{\mathcal{F}_1})$ by filling in the unobserved coordinates given $x_{\mathcal{F}_1}$. It is trained with an auxiliary loss $\mathcal{L}_{\mathrm{imp}}$ applied only to the removed entries, keeping its training signal disentangled from the predictive objective. Given $x_{\mathcal{F}_1}$, two independent draws from $q_\psi$ yield two stochastic completions $\hat{x}_a$ and $\hat{x}_b$: each merges the observed entries of $x$ with independently imputed draws for the missing coordinates, providing the approximately conditionally independent refinements that Eq.~\eqref{eq:2mc} requires.

\paragraph{Prediction-space and latent-space variants.} Writing $z\!=\!f_\theta(x_{\mathcal{F}_1})$ and $z_a\!=\!f_\theta(\hat{x}_a)$, $z_b\!=\!f_\theta(\hat{x}_b)$, prediction-space variants substitute $u = g_\phi(z)$ for the coarse prediction and $v_a = g_\phi(z_a)$, $v_b = g_\phi(z_b)$ for the two refined predictions, giving the estimator $\widehat{\mathcal{L}}_{\mathrm{mart,pred}}^{2\mathrm{MC}}$. Latent-space variants instead use these representations directly:
\begin{equation}
\widehat{\mathcal{L}}_{\mathrm{mart,lat}}^{2\mathrm{MC}}=\left(z-z_a\right)^\top\left(z-z_b\right),
\end{equation}
where $z$ is the coarse latent representation and $z_a,z_b$ are the two \nolinebreak refined representations. This concretizes the latent-space surrogate of Section~\ref{sec:methods} via imputer-based refinement.

\paragraph{EMA target networks.} Both variants admit an exponential moving average (EMA) extension in which the refined targets ($v_a, v_b$ or $z_a, z_b$) are computed with a slowly-updating copy of the encoder or prediction head. Its parameters $\theta_{\mathrm{EMA}}$ track the main network via $\theta_{\mathrm{EMA}} \leftarrow \tau\,\theta_{\mathrm{EMA}} + (1-\tau)\,\theta$ after each step, preventing the moving-target instability that arises when the same network determines both the coarse prediction and the refined targets it is optimized to match. 

\vspace{0.1cm}
The combined training objective is given by
\begin{equation}
\mathcal{L}_{\mathrm{total}}=\mathcal{L}_{\mathrm{pred}}
+\lambda_{\mathrm{imp}}\,\mathcal{L}_{\mathrm{imp}}
+\lambda_{\mathrm{mart}}\,\mathcal{L}_{\mathrm{mart}},
\end{equation}
where $\mathcal{L}_{\mathrm{mart}}$ is implemented by $\widehat{\mathcal{L}}_{\mathrm{mart,pred}}^{2\mathrm{MC}}$ or $\widehat{\mathcal{L}}_{\mathrm{mart,lat}}^{2\mathrm{MC}}$ depending on the variant. $\mathcal{L}_{\mathrm{imp}}$ is part of the imputation-based refinement mechanism rather than the abstract martingale objective: it trains $q_\psi$ to produce the stochastic completions that instantiate the two-sample estimator. Full implementation details are provided in Appendix~\ref{app:impl}.
\vspace{-0.05cm}
\section{Experiments}
\label{sec:experiments}
\vspace{-0.12cm}
\subsection{Overview}
\vspace{-0.12cm}
We evaluate our martingale-consistent SSL framework, testing whether enforcing conditional coherence leads to more reliable representations as the amount of observed information varies. Downstream robustness is measured by probe accuracy across varying levels of observation completeness.

The experiments are organized along two high-level axes. First, we distinguish between a \emph{semi-self-supervised} setting, in which class labels define the predictive term, and a \emph{fully self-supervised} setting, in which the main training signal is based on reconstruction, i.e., label-free. Second, we evaluate our approach across various modalities: We begin with \textit{time-series} data where right-censoring gives the clearest concrete picture of a martingale: observing a longer prefix corresponds directly to a refinement of information. We then move to more complex temporal missingness patterns and feature revelation in \textit{tabular} and \textit{image} data, which instantiate the same nested-information relation $\mathcal{F}_1 \subseteq \mathcal{F}_2$ without requiring a strictly sequential interpretation. This illustrates that the framework is not tied to temporal data and can extend to other modalities with evolving or partial information. Finally, Appendix~\ref{app:tcga} reports a complementary application in a TCGA multi-omics survival study.

\subsection{Baselines and Model Ablations}
Relative to the base SSL architecture in Figure~\ref{fig:exp_pipeline}, our simplest baseline keeps only the encoder and predictor and optimizes the underlying SSL objective with $\lambda_{\mathrm{mart}}=\lambda_{\mathrm{imp}}=0$ (\textit{Base}). We also consider a non-martingale imputation baseline (\textit{Base + imputation}), which includes the imputation module and its auxiliary loss ($\lambda_{\mathrm{imp}}>0$) but sets $\lambda_{\mathrm{mart}}=0$. The martingale extensions come in the four variants arising from Section~\ref{sec:instantiation}: \textit{Base + mart.} applies the martingale penalty in prediction space; \textit{Base + mart. (EMA)} uses the same prediction-space penalty but computes the refined targets with an EMA copy of the prediction head; \textit{Base + mart. (latent)} applies the martingale penalty in latent space; \textit{Base + mart. (latent, EMA)} combines latent-space regularization with a full-model EMA target (encoder and prediction head). In the fully self-supervised static setting, we also include custom tabular adaptations of \textit{SimCLR} \cite{Chen2020_SimCLR} and \textit{BYOL} \cite{Grill2020_BYOL} – self-supervised objectives originally developed for images (SimCLR uses a contrastive objective, BYOL a bootstrap objective; see Appendix~\ref{app:impl:hyperparams} for tabular adaptation details) – and extend each with a latent-space martingale consistency term enforcing cross-conditioning coherence on top of the primary objective.

\subsection{Benchmark families and datasets}
The empirical study combines three main benchmark families:
\vspace{-0.1cm}
\paragraph{Time-series benchmarks.} We start with simulated time-series data (\emph{T-SIM-RC}) under \emph{right-censored} missingness, where only a prefix of each sequence is observed. We then study a structured temporal simulation (\emph{T-SIM}), where entries can be missing at different time points according to a richer temporal pattern, and finally move to real time-series datasets: \emph{UCI HAR (HAR)} \cite{Anguita2013_UCIHAR}, \emph{Traffic (TRAF)} \cite{Cuturi2011_PEMSSF}, \emph{Stock (STK)}, \emph{Character Trajectories (CT)} \cite{Williams2006_CharacterTrajectories}, and \emph{Spoken Arabic Digits (SAD)} \cite{BeddaHammami2008_SpokenArabicDigit}.
\vspace{-0.1cm}
\paragraph{Static tabular benchmarks.} We also study simulated static data (\emph{S-SIM}) and real tabular datasets: \emph{Adult (AD)} \cite{BeckerKohavi1996_Adult}, \emph{Bank Marketing (BM)} \cite{MoroCortezRita2014_BankMarketing}, \emph{Credit-g (CG)} \cite{Hofmann1994_CreditG}, and \emph{Phoneme (PH)} \cite{HastieBujaTibshirani1995_Phoneme}. Here, partial observations act over features rather than over time.
\vspace{-0.1cm}
\paragraph{Image benchmarks.} To test whether the same conditional-coherence idea remains in high-dimensional visual settings, we also evaluate CIFAR-10 \cite{Krizhevsky2009_CIFAR10} and STL-10 \cite{Gordon2011_STL-10} under center-biased patch masking, where the coarse view reveals only part of the image.
\vspace{0.1cm}

Apart from the right-censored time-series and image benchmarks, training masks are drawn from a prior inferred on a held-out prior-fitting split, rather than sampled directly from the target evaluation process. This keeps SSL training separate from the final held-out test evaluation while still exposing models to realistic partial-observation patterns. Within each training batch, the custom training masks randomize the observed-information rate over $[0.05, 1.0]$, ensuring models are trained across both mild and severe partial-observation regimes. Appendix~\ref{app:data} summarizes the datasets, while Appendices~\ref{app:data:partial} and~\ref{app:impl:masking} detail the partial-observation processes and their training-time  approximations. 

\subsection{Results}
\vspace{-0.10cm}
Our main scalar summary is mean downstream probe accuracy over completeness levels (information fraction observed) $c \in \{0.05, 0.2, 0.4, 0.6, 0.8\}$.
\paragraph{Time-series benchmarks.}
\vspace{-0.10cm} Table~\ref{tab:results_temporal_class} shows a consistent benefit from martingale regularization on the time-series benchmarks. In the semi-self-supervised setting, a martingale variant attains the best mean downstream accuracy on all 7 datasets, with the largest relative gains over Base on \emph{T-SIM-RC} ($+30.9\%$), \emph{CT} ($+155.6\%$), and \emph{SAD} ($+37.8\%$). The non-martingale imputation baseline is sometimes mildly beneficial – on \emph{STK} it ties the best martingale variant. In the fully self-supervised time-series setting, martingale regularization attains the best result on all seven datasets, with especially clear gains on \emph{T-SIM-RC} ($+14.6\%$), \emph{TRAF} ($+12.6\%$), and \emph{SAD} ($+17.3\%$). Overall, the time-series experiments support the claim that enforcing cross-information coherence improves robustness under reduced history.
\begin{table*}[t!]
  \vspace{-0.2cm}
  \centering
  \scriptsize
  \caption{Downstream accuracies on the time-series benchmarks. Each dataset contributes mean downstream accuracy over completeness levels $0.05$ to $0.8$, shown as mean $\pm$ SEM across 5 runs.}
  \begingroup
  \setlength{\tabcolsep}{4pt}
  \begin{tabular}{lccccccc}
    \toprule
    & \textbf{T-SIM-RC} & \textbf{T-SIM} & \textbf{HAR} & \textbf{TRAF} & \textbf{STK} & \textbf{CT} & \textbf{SAD} \\
    \midrule
    \multicolumn{8}{c}{\textit{Semi-self-supervised}} \\
    \midrule
    Base & 0.320{\taberr{6.5\mathrm{e}^{-3}}} & 0.266{\taberr{1.0\mathrm{e}^{-2}}} & 0.867{\taberr{4.4\mathrm{e}^{-3}}} & 0.859{\taberr{5.5\mathrm{e}^{-3}}} & 0.564{\taberr{1.2\mathrm{e}^{-2}}} & 0.153{\taberr{1.8\mathrm{e}^{-2}}} & 0.291{\taberr{8.4\mathrm{e}^{-3}}} \\
    Base (with imputation) & 0.333{\taberr{5.5\mathrm{e}^{-3}}} & 0.267{\taberr{1.0\mathrm{e}^{-2}}} & 0.867{\taberr{4.2\mathrm{e}^{-3}}} & 0.851{\taberr{5.7\mathrm{e}^{-3}}} & \textbf{0.582{\taberr{4.7\mathrm{e}^{-3}}}} & 0.156{\taberr{1.3\mathrm{e}^{-2}}} & 0.291{\taberr{9.4\mathrm{e}^{-3}}} \\
    \rowcolor{Accent15} Base + mart. & \textbf{0.419{\taberr{2.5\mathrm{e}^{-3}}}} & \textbf{0.315{\taberr{6.5\mathrm{e}^{-3}}}} & 0.883{\taberr{2.2\mathrm{e}^{-3}}} & 0.824{\taberr{3.0\mathrm{e}^{-2}}} & 0.581{\taberr{5.2\mathrm{e}^{-3}}} & 0.344{\taberr{1.1\mathrm{e}^{-2}}} & 0.387{\taberr{8.6\mathrm{e}^{-3}}} \\
    \rowcolor{Accent15} Base + mart. (EMA) & 0.375{\taberr{1.0\mathrm{e}^{-2}}} & 0.313{\taberr{3.5\mathrm{e}^{-3}}} & \textbf{0.888{\taberr{2.7\mathrm{e}^{-3}}}} & \textbf{0.905{\taberr{9.7\mathrm{e}^{-3}}}} & 0.570{\taberr{1.2\mathrm{e}^{-2}}} & \textbf{0.391{\taberr{9.9\mathrm{e}^{-3}}}} & \textbf{0.401{\taberr{2.8\mathrm{e}^{-3}}}} \\
    \rowcolor{Accent15} Base + mart. (latent) & 0.414{\taberr{2.1\mathrm{e}^{-3}}} & 0.312{\taberr{4.1\mathrm{e}^{-4}}} & 0.887{\taberr{3.5\mathrm{e}^{-3}}} & 0.836{\taberr{2.7\mathrm{e}^{-2}}} & 0.581{\taberr{7.9\mathrm{e}^{-3}}} & 0.268{\taberr{7.8\mathrm{e}^{-3}}} & 0.359{\taberr{4.8\mathrm{e}^{-3}}} \\
    \rowcolor{Accent15} Base + mart. (latent, EMA) & 0.399{\taberr{7.5\mathrm{e}^{-3}}} & 0.294{\taberr{5.3\mathrm{e}^{-3}}} & 0.886{\taberr{1.6\mathrm{e}^{-3}}} & 0.849{\taberr{2.5\mathrm{e}^{-2}}} & \textbf{0.582{\taberr{4.0\mathrm{e}^{-3}}}} & 0.296{\taberr{8.2\mathrm{e}^{-3}}} & 0.314{\taberr{5.9\mathrm{e}^{-3}}} \\
    \relgaindash{1-8}
    \rowcolor{GainRowGray}\textit{Rel.\ gain (vs Base)} & \textcolor{GainGreen}{+30.9\%} & \textcolor{GainGreen}{+18.4\%} & \textcolor{GainGreen}{+2.4\%} & \textcolor{GainGreen}{+5.4\%} & \textcolor{GainGreen}{+3.2\%} & \textcolor{GainGreen}{+155.6\%} & \textcolor{GainGreen}{+37.8\%} \\
    \doublemidrule
    \multicolumn{8}{c}{\textit{Fully self-supervised}} \\
    \midrule
    Base & 0.356{\taberr{8.4\mathrm{e}^{-3}}} & 0.288{\taberr{3.0\mathrm{e}^{-3}}} & 0.751{\taberr{7.6\mathrm{e}^{-3}}} & 0.555{\taberr{1.2\mathrm{e}^{-2}}} & 0.636{\taberr{7.9\mathrm{e}^{-3}}} & 0.137{\taberr{1.2\mathrm{e}^{-2}}} & 0.295{\taberr{5.5\mathrm{e}^{-3}}} \\
    Base (with imputation) & 0.362{\taberr{8.1\mathrm{e}^{-3}}} & 0.291{\taberr{2.8\mathrm{e}^{-3}}} & 0.752{\taberr{7.3\mathrm{e}^{-3}}} & 0.552{\taberr{1.1\mathrm{e}^{-2}}} & 0.637{\taberr{6.3\mathrm{e}^{-3}}} & 0.135{\taberr{1.2\mathrm{e}^{-2}}} & 0.283{\taberr{6.1\mathrm{e}^{-3}}} \\
    \rowcolor{Accent15} Base + mart. & 0.406{\taberr{3.2\mathrm{e}^{-3}}} & 0.307{\taberr{1.2\mathrm{e}^{-3}}} & 0.753{\taberr{7.3\mathrm{e}^{-3}}} & \textbf{0.625{\taberr{6.1\mathrm{e}^{-3}}}} & 0.637{\taberr{8.5\mathrm{e}^{-3}}} & 0.133{\taberr{1.1\mathrm{e}^{-2}}} & 0.283{\taberr{6.6\mathrm{e}^{-3}}} \\
    \rowcolor{Accent15} Base + mart. (EMA) & 0.404{\taberr{2.5\mathrm{e}^{-4}}} & 0.308{\taberr{6.2\mathrm{e}^{-4}}} & 0.752{\taberr{7.8\mathrm{e}^{-3}}} & 0.604{\taberr{2.1\mathrm{e}^{-2}}} & 0.648{\taberr{0}} & \textbf{0.175{\taberr{9.6\mathrm{e}^{-3}}}} & 0.249{\taberr{1.2\mathrm{e}^{-2}}} \\
    \rowcolor{Accent15} Base + mart. (latent) & 0.406{\taberr{9.4\mathrm{e}^{-4}}} & 0.307{\taberr{5.8\mathrm{e}^{-4}}} & 0.746{\taberr{7.3\mathrm{e}^{-3}}} & 0.584{\taberr{9.7\mathrm{e}^{-3}}} & 0.636{\taberr{6.9\mathrm{e}^{-3}}} & 0.137{\taberr{1.3\mathrm{e}^{-2}}} & 0.305{\taberr{3.4\mathrm{e}^{-3}}} \\
    \rowcolor{Accent15} Base + mart. (latent, EMA) & \textbf{0.408{\taberr{2.3\mathrm{e}^{-3}}}} & \textbf{0.310{\taberr{2.5\mathrm{e}^{-5}}}} & \textbf{0.754{\taberr{7.0\mathrm{e}^{-3}}}} & 0.556{\taberr{8.9\mathrm{e}^{-3}}} & \textbf{0.650{\taberr{1.4\mathrm{e}^{-2}}}} & 0.139{\taberr{1.3\mathrm{e}^{-2}}} & \textbf{0.346{\taberr{6.6\mathrm{e}^{-3}}}} \\
    \relgaindash{1-8}
    \rowcolor{GainRowGray}\textit{Rel.\ gain (vs Base)} & \textcolor{GainGreen}{+14.6\%} & \textcolor{GainGreen}{+7.6\%} & \textcolor{GainGreen}{+0.4\%} & \textcolor{GainGreen}{+12.6\%} & \textcolor{GainGreen}{+2.2\%} & \textcolor{GainGreen}{+27.7\%} & \textcolor{GainGreen}{+17.3\%} \\
    \bottomrule
  \end{tabular}
  \endgroup
  \label{tab:results_temporal_class}
  \label{tab:results_temporal_label_free}
  \vspace{-0.1cm}
\end{table*}

\vspace{-0.15cm}
\paragraph{Static tabular benchmarks.}

\begin{table*}[b!]
  \vspace{-0.4cm}
  \centering
  \scriptsize
  \caption{Downstream accuracies on the static tabular benchmarks. Each dataset contributes mean downstream accuracy over completeness levels $0.05$ to $0.8$, shown as mean $\pm$ SEM across 5 runs.}
  \begingroup
  \setlength{\tabcolsep}{10.9pt}%
  \begin{tabular}{lccccc}
    \toprule
    & \textbf{S-SIM} & \textbf{AD} & \textbf{BM} & \textbf{CG} & \textbf{PH} \\
    \midrule
    \multicolumn{6}{c}{\textit{Semi-self-supervised}} \\
    \midrule
    Base & 0.328{\taberr{5.4\mathrm{e}^{-3}}} & 0.786{\taberr{1.5\mathrm{e}^{-2}}} & 0.814{\taberr{3.4\mathrm{e}^{-2}}} & 0.701{\taberr{8.4\mathrm{e}^{-3}}} & 0.735{\taberr{6.2\mathrm{e}^{-3}}} \\
    Base (with imputation) & 0.368{\taberr{9.9\mathrm{e}^{-3}}} & 0.784{\taberr{1.7\mathrm{e}^{-2}}} & 0.823{\taberr{3.4\mathrm{e}^{-2}}} & 0.690{\taberr{9.9\mathrm{e}^{-3}}} & 0.746{\taberr{2.1\mathrm{e}^{-3}}} \\
    \rowcolor{Accent15} Base + mart. & 0.423{\taberr{4.4\mathrm{e}^{-3}}} & \textbf{0.810{\taberr{7.2\mathrm{e}^{-4}}}} & \textbf{0.895{\taberr{2.5\mathrm{e}^{-4}}}} & 0.716{\taberr{8.1\mathrm{e}^{-3}}} & 0.752{\taberr{1.9\mathrm{e}^{-3}}} \\
    \rowcolor{Accent15} Base + mart. (EMA) & 0.432{\taberr{3.3\mathrm{e}^{-3}}} & 0.809{\taberr{6.6\mathrm{e}^{-4}}} & 0.894{\taberr{1.8\mathrm{e}^{-4}}} & 0.692{\taberr{8.3\mathrm{e}^{-3}}} & 0.753{\taberr{2.2\mathrm{e}^{-3}}} \\
    \rowcolor{Accent15} Base + mart. (latent) & 0.431{\taberr{3.3\mathrm{e}^{-3}}} & 0.807{\taberr{4.6\mathrm{e}^{-4}}} & 0.891{\taberr{4.3\mathrm{e}^{-4}}} & 0.720{\taberr{6.9\mathrm{e}^{-3}}} & \textbf{0.755{\taberr{2.1\mathrm{e}^{-3}}}} \\
    \rowcolor{Accent15} Base + mart. (latent, EMA) & \textbf{0.440{\taberr{2.0\mathrm{e}^{-3}}}} & 0.790{\taberr{1.7\mathrm{e}^{-2}}} & 0.892{\taberr{1.7\mathrm{e}^{-3}}} & \textbf{0.724{\taberr{5.3\mathrm{e}^{-3}}}} & 0.749{\taberr{4.1\mathrm{e}^{-3}}} \\
    \relgaindash{1-6}
    \rowcolor{GainRowGray}\textit{Rel.\ gain (vs Base)} & \textcolor{GainGreen}{+34.1\%} & \textcolor{GainGreen}{+3.1\%} & \textcolor{GainGreen}{+10.0\%} & \textcolor{GainGreen}{+3.3\%} & \textcolor{GainGreen}{+2.7\%} \\
    \doublemidrule
    \multicolumn{6}{c}{\textit{Fully self-supervised}} \\
    \midrule
    Base & 0.411{\taberr{1.9\mathrm{e}^{-3}}} & 0.787{\taberr{8.2\mathrm{e}^{-4}}} & 0.888{\taberr{6.0\mathrm{e}^{-4}}} & 0.724{\taberr{8.8\mathrm{e}^{-4}}} & 0.720{\taberr{1.5\mathrm{e}^{-3}}} \\
    Base (with imputation) & 0.410{\taberr{3.0\mathrm{e}^{-3}}} & 0.789{\taberr{7.3\mathrm{e}^{-4}}} & 0.887{\taberr{9.8\mathrm{e}^{-4}}} & 0.721{\taberr{1.9\mathrm{e}^{-3}}} & 0.733{\taberr{2.6\mathrm{e}^{-3}}} \\
    \rowcolor{Accent15} Base + mart. & 0.410{\taberr{3.5\mathrm{e}^{-3}}} & \textbf{0.799{\taberr{7.3\mathrm{e}^{-4}}}} & \textbf{0.889{\taberr{1.7\mathrm{e}^{-4}}}} & \textbf{0.725{\taberr{1.1\mathrm{e}^{-3}}}} & \textbf{0.734{\taberr{2.5\mathrm{e}^{-3}}}} \\
    \rowcolor{Accent15} Base + mart. (EMA) & 0.413{\taberr{1.1\mathrm{e}^{-3}}} & 0.796{\taberr{9.9\mathrm{e}^{-4}}} & 0.888{\taberr{3.3\mathrm{e}^{-4}}} & \textbf{0.725{\taberr{1.1\mathrm{e}^{-3}}}} & 0.733{\taberr{2.2\mathrm{e}^{-3}}} \\
    \rowcolor{Accent15} Base + mart. (latent) & 0.421{\taberr{4.4\mathrm{e}^{-3}}} & 0.797{\taberr{9.9\mathrm{e}^{-4}}} & \textbf{0.889{\taberr{1.6\mathrm{e}^{-4}}}} & 0.722{\taberr{3.0\mathrm{e}^{-3}}} & 0.725{\taberr{2.5\mathrm{e}^{-3}}} \\
    \rowcolor{Accent15} Base + mart. (latent, EMA) & \textbf{0.432{\taberr{1.4\mathrm{e}^{-3}}}} & 0.797{\taberr{2.5\mathrm{e}^{-4}}} & 0.886{\taberr{3.1\mathrm{e}^{-4}}} & 0.723{\taberr{0}} & 0.720{\taberr{9.7\mathrm{e}^{-4}}} \\
    \relgaindash{1-6}
    \rowcolor{GainRowGray}\textit{Rel.\ gain (vs Base)} & \textcolor{GainGreen}{+5.1\%} & \textcolor{GainGreen}{+1.5\%} & \textcolor{GainGreen}{+0.1\%} & \textcolor{GainGreen}{+0.1\%} & \textcolor{GainGreen}{+1.9\%} \\
    \sectionrule
    SimCLR & 0.390{\taberr{8.5\mathrm{e}^{-3}}} & 0.785{\taberr{1.8\mathrm{e}^{-3}}} & 0.884{\taberr{3.0\mathrm{e}^{-4}}} & 0.622{\taberr{4.9\mathrm{e}^{-2}}} & 0.716{\taberr{3.9\mathrm{e}^{-3}}} \\
    SimCLR (imputation only) & \textbf{0.414{\taberr{2.4\mathrm{e}^{-3}}}} & 0.794{\taberr{1.9\mathrm{e}^{-3}}} & \textbf{0.885{\taberr{5.4\mathrm{e}^{-4}}}} & 0.691{\taberr{2.4\mathrm{e}^{-3}}} & \textbf{0.724{\taberr{3.4\mathrm{e}^{-3}}}} \\
    \rowcolor{Accent15} SimCLR + mart. (latent) & \textbf{0.414{\taberr{3.2\mathrm{e}^{-3}}}} & \textbf{0.800{\taberr{6.0\mathrm{e}^{-4}}}} & 0.884{\taberr{5.5\mathrm{e}^{-4}}} & \textbf{0.720{\taberr{2.7\mathrm{e}^{-4}}}} & 0.713{\taberr{2.6\mathrm{e}^{-3}}} \\
    \relgaindash{1-6}
    \rowcolor{GainRowGray}\textit{Rel.\ gain (vs vanilla SimCLR)} & \textcolor{GainGreen}{+6.2\%} & \textcolor{GainGreen}{+1.9\%} & \textcolor{GainGreen}{+0.0\%} & \textcolor{GainGreen}{+15.8\%} & \textcolor{GainRed}{-0.4\%} \\
    \sectionrule
    BYOL & 0.296{\taberr{1.7\mathrm{e}^{-2}}} & 0.660{\taberr{4.4\mathrm{e}^{-2}}} & 0.723{\taberr{5.6\mathrm{e}^{-2}}} & 0.661{\taberr{3.5\mathrm{e}^{-2}}} & 0.665{\taberr{2.2\mathrm{e}^{-2}}} \\
    BYOL (imputation only) & 0.393{\taberr{9.2\mathrm{e}^{-3}}} & 0.795{\taberr{2.4\mathrm{e}^{-3}}} & 0.865{\taberr{5.7\mathrm{e}^{-3}}} & 0.694{\taberr{1.9\mathrm{e}^{-3}}} & \textbf{0.746{\taberr{2.4\mathrm{e}^{-3}}}} \\
    \rowcolor{Accent15} BYOL + mart. (latent) & \textbf{0.411{\taberr{3.0\mathrm{e}^{-3}}}} & \textbf{0.798{\taberr{8.6\mathrm{e}^{-4}}}} & \textbf{0.883{\taberr{5.0\mathrm{e}^{-5}}}} & \textbf{0.719{\taberr{4.6\mathrm{e}^{-3}}}} & 0.734{\taberr{1.8\mathrm{e}^{-3}}} \\
    \relgaindash{1-6}
    \rowcolor{GainRowGray}\textit{Rel.\ gain (vs vanilla BYOL)} & \textcolor{GainGreen}{+38.9\%} & \textcolor{GainGreen}{+20.9\%} & \textcolor{GainGreen}{+22.1\%} & \textcolor{GainGreen}{+8.8\%} & \textcolor{GainGreen}{+10.4\%} \\
    \bottomrule
  \end{tabular}
  \endgroup
\label{tab:results_static_class}
   \vspace{-0.15cm}
\end{table*}

Table~\ref{tab:results_static_class} summarizes five static tabular benchmarks under feature-wise missingness. In the semi-self-supervised block, a martingale variant attains the best mean accuracy on every dataset, with the largest relative improvements over \emph{Base} on \emph{S-SIM} ($+34.1\%$) and \emph{BM} ($+10.0\%$). In the fully self-supervised block, \emph{Base + martingale} variants improves over plain \emph{Base} on all five datasets; the relative gains are smaller than in the semi-self-supervised setting but remain uniformly positive. For tabular SimCLR and BYOL, martingale regularization improves downstream accuracy relative to the vanilla objectives indicating that the additional constraints captures information that SimCLR) and BYOL do not. Imputation-only training remains competitive in predictive performance on some datasets but could be incurring larger martingale consistency violations (Figure~\ref{fig:probe_curves_selected}).

\vspace{-0.15cm}
\paragraph{Robustness curves and martingale violations.}
Figure~\ref{fig:probe_curves_selected} complements the tables with robustness curves for the simulated benchmarks together with prediction-space and latent-space martingale violations across completeness. For each completeness level $c$, we define
\begingroup\small
\begin{equation*}
\widehat{\mathcal{V}}_{\mathrm{pred}}(c)\!:=\!\frac{1}{N d_g}\sum_{i=1}^N\left\|u_i(c)\!-\!\frac{1}{K}\sum_{k=1}^K v_i^{(k)}(c)\right\|_2^2\!\!,
\qquad \hspace{-0.2cm}
\widehat{\mathcal{V}}_{\mathrm{lat}}(c)\!:=\!\frac{1}{N d_z}\sum_{i=1}^N\bigl(z_i(c)-z_i^{(a)}(c)\bigr)^{\!\top}\!\bigl(z_i(c)\!-\!z_i^{(b)}(c)\bigr).
\end{equation*}
\endgroup
where $u_i(c)$ is the coarse-view prediction, $v_i^{(k)}(c)$ are refined predictions, and $z_i^{(a)}(c),z_i^{(b)}(c)$ are two independently refined latent representations. On simulated data, the prediction-space violation can be compared directly to the true conditional expectation induced by the known data-generating process, approximated by Monte Carlo sampling; no analogous population quantity exists in latent space, where the violation is estimated by passing two independent stochastic completions via the imputer through the encoder and measuring representation mismatch. The curves show that the gains are consistently concentrated in the low-completeness regime, precisely where prediction coherence under information refinement is most critical, while the martingale objective often reduces violations in both spaces. This is plausible because the encoder and prediction head are coupled: regularizing outputs can stabilize the representation, while regularizing the representation can propagate to downstream predictions. Prediction-space regularization acts more directly on task outputs, whereas latent-space regularization acts more directly on the representation geometry used by the downstream probe, and neither is uniformly best across the reported benchmarks. Moreover, lower martingale violation is systematically associated with better downstream robustness after accounting for the general effect of increasing completeness, as seen in negative Spearman correlations; Appendix~\ref{app:theory:bound} provides a theoretical explanation.

\begin{figure}[b!]
\centering
\includegraphics[width=1.0\linewidth]{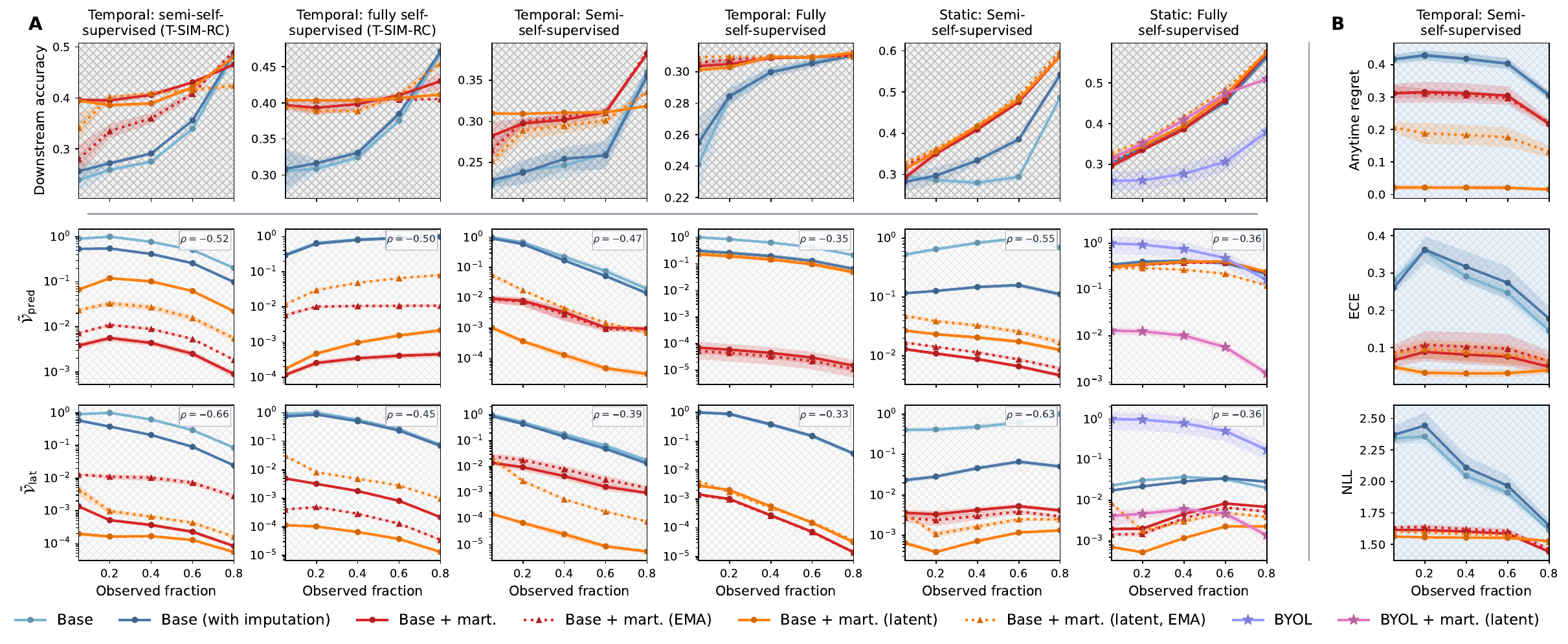}
\vspace{-0.5cm}
    \caption{(A) Robustness under missingness on simulated benchmarks. Rows show downstream probe accuracy, prediction-space violation, and latent-space violation versus completeness; violations are normalized per column. Insets show completeness-adjusted Spearman associations between downstream accuracy and log-violation. (B) calibration diagnostics for the right-censored semi-self-supervised setting–anytime regret (accuracy deficit vs.\ full-information), ECE, and NLL.}
  \label{fig:probe_curves_selected}
\end{figure}

\paragraph{Calibration under information change.} Panel B in Figure~\ref{fig:probe_curves_selected} shows calibration diagnostics for the temporal semi-self-supervised benchmark. Anytime regret (accuracy deficit relative to full information), expected calibration error (ECE), and negative log-likelihood (NLL) generally worsen when less information is available, but \textit{Base} deteriorates far more than martingale-consistent training. Across  $c\in \{0.05, 0.2, 0.4, 0.6, 0.8\}$, anytime regret averages about $0.39$ for \textit{Base} versus about $0.02$ for \textit{Base~+~mart.~(latent)}. Under partial observation, average ECE is about $0.26$ for \textit{Base} versus about $0.04$ for \textit{Base~+~mart.~(latent)}; the corresponding average increase in NLL relative to the full view is about $1.09$ versus about $0.09$. This supports the conditional expectation family view: enforcing cross-view coherence during training improves calibration under partial observation.

\vspace{-0.15cm}
\paragraph{Image benchmarks.} 
Figure~\ref{fig:image_missingness_and_performance} extends the partial-observation setting to CIFAR-10 and STL-10 under center-biased patch masking. On CIFAR-10, in the semi-self-supervised image setting, latent martingale regularization improves mean downstream accuracy from $0.311$ to $0.383$ ($+23.2\%$ over the base model; $+8.5\%$ over base with imputation) while reducing both prediction- and latent-space martingale violations. The same pattern appears for CIFAR-10 SimCLR, where the martingale variant improves mean accuracy from $0.213$ to $0.324$ ($+52.1\%$ over SimCLR; $+20.4\%$ over SimCLR with imputation) and lowers both violations. On STL-10, the semi-self-supervised martingale variant improves over the base model from $0.255$ to $0.311$ ($+22.0\%$), though the non-martingale imputation baseline reaches $0.324$. For STL-10 SimCLR, martingale regularization improves mean accuracy from $0.210$ to $0.306$ ($+45.7\%$ over SimCLR; $+9.3\%$ over SimCLR with imputation) and reduces both prediction- and latent-space violations. This suggests that the coherence constraint remains useful in higher-dimensional visual settings as well. We provide additional experimental results, including a sensitivity analysis, in Appendix~\ref{app:additional_experiments}.

\begin{figure*}[t]
  \vspace{-0cm}
  \centering
  \def\ImgOverlayTabShiftX{-4.35cm}
  \def\ImgOverlayTabShiftY{-0.28cm}
  \begin{tikzpicture}
    \node[inner sep=0](Img){\includegraphics[width=\linewidth]{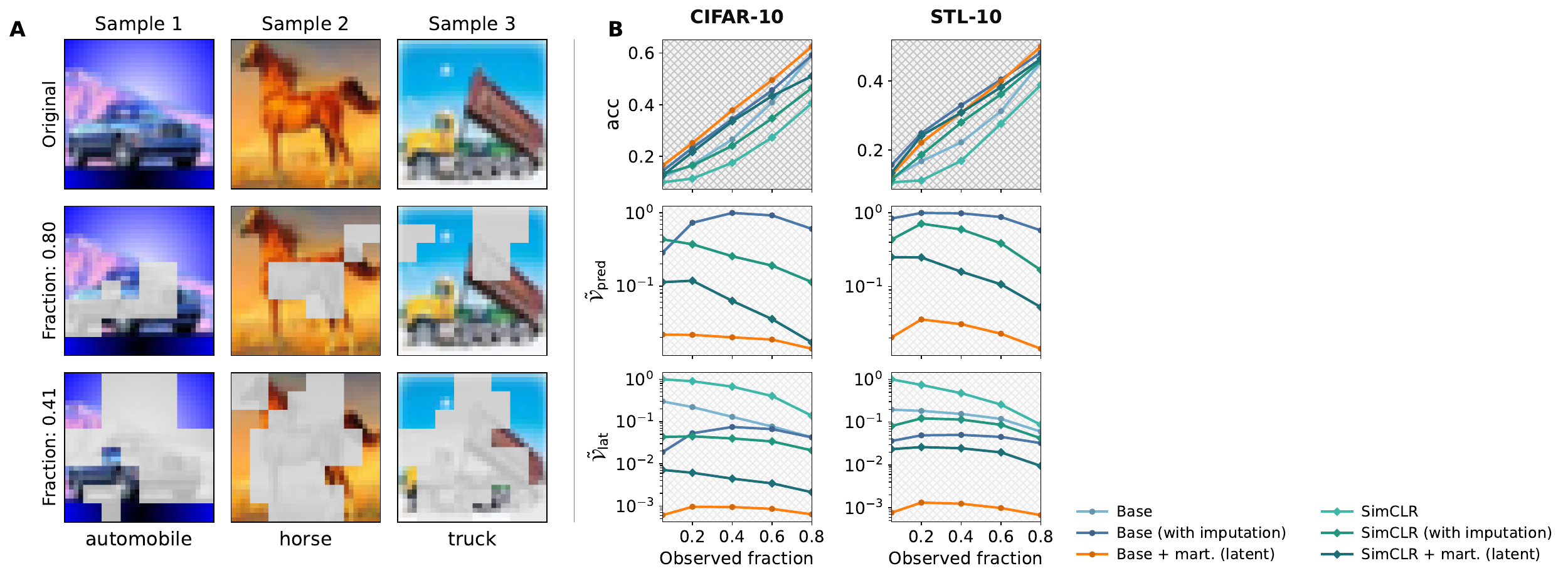}};
    \coordinate(TabAt)at($(Img.north east)+(\ImgOverlayTabShiftX,\ImgOverlayTabShiftY)$);
    \node[anchor=north west, inner sep=2.2pt, fill=white, fill opacity=0.96, text opacity=1]at(TabAt){
      \scalebox{0.87}{
    \begingroup\scriptsize\setlength{\tabcolsep}{2.5pt}\renewcommand{\arraystretch}{0.92}
      \begin{tabular}{>{\raggedright\arraybackslash}p{2.35cm}cc}
        \toprule
        & \textbf{CIFAR-10} & \textbf{STL-10} \\
        \midrule
        \multicolumn{3}{c}{\textit{Semi-self-supervised}} \\
        \midrule
        Base & 0.311 & 0.255 \\
        Base (with imp.) & 0.353 & \textbf{0.324} \\
        \rowcolor{Accent15} Base + mart. (latent) & \textbf{0.383} & 0.311 \\
        \relgaindash{1-3}
        \rowcolor{GainRowGray}\textit{Rel.\ gain (vs Base)} & \textcolor{GainGreen}{+23.2\%} & \textcolor{GainGreen}{+22.0\%} \\
        \doublemidrule
        \multicolumn{3}{c}{\textit{Fully self-supervised}} \\
        \midrule
        SimCLR & 0.213 & 0.210 \\
        SimCLR (with imp.) & 0.269 & 0.280 \\
        \rowcolor{Accent15} SimCLR+mart. (latent) & \textbf{0.324} & \textbf{0.306} \\
        \relgaindash{1-3}
        \rowcolor{GainRowGray}\textit{Rel.\ gain (vs  SimCLR)} & \textcolor{GainGreen}{+52.1\%} & \textcolor{GainGreen}{+45.7\%} \\
        \bottomrule
      \end{tabular}
      \endgroup}}
    ;
  \end{tikzpicture}
  \vspace{-0.4cm}
  \caption{(A)~CIFAR-10 partial-observation visualization using center-biased masking. (B)~Per-method mean downstream accuracy and normalized prediction- and latent-space martingale violation curves for CIFAR-10 and STL-10, together with a compact table of the reported mean accuracies}
\label{fig:image_missingness_and_performance}
\vspace{-0.2cm}
\end{figure*}

\vspace{-0.1cm}
\section{Discussion}
\label{sec:discussion}
\vspace{-0.2cm}
In this work, we introduced martingale consistency as a coherence constraint for self-supervised learning under partial information. Rather than enforcing invariance across views, the proposed objective enforces agreement in expectation under information refinement, allowing predictions to update as new information is revealed while preventing systematic drift. This provides a probabilistic perspective on SSL models deployed across varying masking patterns, context lengths, or observation \nolinebreak regimes.

Empirically, martingale regularization consistently improved robustness and calibration under partial observation across time-series, tabular, and image benchmarks, including contrastive and bootstrap-based objectives. The gains were strongest in low-information regimes, where coherent updating across nested information sets is most critical. Moreover, improvements frequently coincided with reduced martingale violations, suggesting that coherence under refinement captures a useful structural prior absent from standard SSL objectives.

The framework integrates cleanly into existing SSL pipelines through prediction- or latent-space penalties together with an unbiased two-sample estimator based on stochastic refinement. At the same time, several limitations remain. The approach depends on the quality of the refinement mechanism used to approximate conditional expectations, and the interaction between martingale objectives, architecture, and model scale warrants further investigation. More efficient or adaptive strategies for constructing refinement distributions may also improve scalability in large-scale settings.

A particularly promising direction is foundation-model pretraining, where a single model is routinely queried under varying context lengths, masking patterns, or incomplete observations. These systems are therefore implicitly expected to produce coherent updates as information changes, but existing objectives do not explicitly enforce this property. We view martingale consistency as a principled way to address this gap.

\FloatBarrier

\section*{Acknowledgements}
MG is supported by the EPSRC Centre for Doctoral Training in Health Data Science (EP/S02428X/1). CY was supported by a UKRI Turing AI Acceleration Fellowship (EP/V023233/2). The results shown in the Appendix are in part based upon data generated by the TCGA Research Network: \url{https://www.cancer.gov/tcga}.

\bibliographystyle{plainnat}
\bibliography{bibliography}

\newpage

\section*{\Large Appendix}

\appendix

\section{Impact Statement}
\label{app:impact}
This paper introduces a framework for training self-supervised models that produce coherent predictions as information is progressively revealed. By improving robustness and calibration under partial observation, this work has the potential to support more reliable downstream use in settings where data arrives gradually or unevenly, including clinical risk assessment, streaming analysis, and budgeted data acquisition. While the resulting predictions can inform decision-making in such domains, they should complement – not replace – expert judgment, conventional uncertainty quantification, and validation against the specific data-generating processes encountered at deployment. In this sense, ethical considerations include transparency about the missingness regimes under which the method has been validated, since stronger benchmark robustness may encourage use under deployment-time missingness patterns that differ qualitatively from training. One benefit of training under randomized partial-observation regimes is reduced sensitivity to any single observation pattern, and therefore better generalization across heterogeneous data-collection settings. Nevertheless, as with any method targeting partial-information regimes, care must be taken to ensure equitable benefits across populations, since missingness in real-world data is rarely uniform and can correlate with sensitive attributes. This work primarily seeks to improve coherence properties of self-supervised representations, with no foreseeable direct societal harm.

\section{Theoretical Complements}
\label{app:theoretical_complements}

\subsection{Proof of unbiasedness of the 2-sample martingale loss estimator} 
\label{app:proof_unbiasedness}
We show that the two-sample Monte Carlo estimator $\widehat{\mathcal{L}}_{\mathrm{mart}}^{2\mathrm{MC}}$ is an unbiased estimator of the conditional squared error $\|u - E[g_\phi(X_{\mathcal{F}_2}) \mid x_{\mathcal{F}_1}]\|^2$.
\begin{proof}
Fix $x_{\mathcal{F}_1}$ and write
$\mu := E\left[ g_\phi(X_{\mathcal{F}_2}) \mid x_{\mathcal{F}_1} \right]$.
Assume $v_a$ and $v_b$ are drawn conditionally independent given $x_{\mathcal{F}_1}$ and that all vectors have finite second moments. Expanding $(u-v_a)^\top(u-v_b)$ yields $\widehat{\mathcal{L}}_{\mathrm{mart}}^{2\mathrm{MC}}=u^\top u - u^\top v_b - v_a^\top u + v_a^\top v_b$.
Taking the conditional expectation given $x_{\mathcal{F}_1}$ and using linearity, conditional independence, and $E[v_a\!\mid\!x_{\mathcal{F}_1}]\!=\! E[v_b\!\mid\! x_{\mathcal{F}_1}]\!=\!\mu$, we obtain
\begin{align*}
E\left[ \widehat{\mathcal{L}}_{\mathrm{mart}}^{2\mathrm{MC}} \mid x_{\mathcal{F}_1} \right]
&= u^\top u - u^\top E[v_b \mid x_{\mathcal{F}_1}] - E[v_a \mid x_{\mathcal{F}_1}]^\top u + E\left[ v_a^\top v_b \mid x_{\mathcal{F}_1} \right] \\
&= \|u\|^2 - u^\top \mu - \mu^\top u + E[v_a \mid x_{\mathcal{F}_1}]^\top E[v_b \mid x_{\mathcal{F}_1}] \\
&= \|u\|^2 - 2u^\top \mu + \|\mu\|^2 \\
&= \|u - \mu\|^2.
\end{align*}
\end{proof}

\subsection{Extension to non-nested information sets}
The reported experiments do not use non-nested training. In all reported experiments, the coarse view is partially observed and the refined view is always the complete input, so the empirical pipeline is nested throughout. This is the most natural setup for benchmarks where information is progressively revealed, and the one implemented across all training and evaluation pipelines.

Nevertheless, the same coherence perspective suggests a natural extension beyond nested information structures. If $\mathcal{F}_1$ and $\mathcal{F}_2$ are not ordered by inclusion but share some overlap, define
\begin{equation}
\mathcal{F}_\cap := \mathcal{F}_1 \cap \mathcal{F}_2.
\end{equation}
Writing
\begin{equation}
m_{\mathcal{F}} := E[Y \mid \mathcal{F}],
\end{equation}
the corresponding population consistency relation is
\begin{equation}
E[m_{\mathcal{F}_1}\mid \mathcal{F}_\cap] = E[m_{\mathcal{F}_2}\mid \mathcal{F}_\cap].
\end{equation}
Both sides equal $E[Y\mid\mathcal{F}_\cap]$ by the tower property (since $\mathcal{F}_\cap\subseteq\mathcal{F}_i$), so the identity holds at the population level. This does not hold for an arbitrary trained predictor, which is why it can serve as a regularizer in the non-nested case. When the views are nested, $\mathcal{F}_\cap=\mathcal{F}_1$, so this reduces to the usual martingale identity.

This extension is most relevant for settings with overlapping but incomparable views, such as heterogeneous feature subsets or partially overlapping temporal windows. It is less relevant for the partial-observation experiments in this paper, because those benchmarks are explicitly framed as refinement from a partial view to full observation.

\subsection{Excess risk bound under martingale violation}
\label{app:theory:bound}

The following result formalizes the empirical association reported in Section~\ref{sec:experiments}: the coarse-level excess risk decomposes into the martingale violation and the fine-level error, providing a theoretical explanation for why lower violation is associated with better robustness.

\begin{proposition}[Excess risk decomposition]
\label{prop:excess_risk}
Let $\mathcal{F}_1 \subseteq \mathcal{F}_2$ and let $g_\phi$ be any predictor with finite second moments. Define the fine-level excess risk $R(\mathcal{F}_2;\,g_\phi) := E[\|g_\phi(x_{\mathcal{F}_2}) - E[Y|\mathcal{F}_2]\|^2]$ and the martingale violation $\mathcal{V}(g_\phi) := E[\|g_\phi(x_{\mathcal{F}_1}) - E[g_\phi(x_{\mathcal{F}_2})|\mathcal{F}_1]\|^2]$. Then
\begin{equation}
E\bigl[\|g_\phi(x_{\mathcal{F}_1}) - E[Y|\mathcal{F}_1]\|^2\bigr] \;\leq\; 2\,\mathcal{V}(g_\phi) + 2\,R(\mathcal{F}_2;\,g_\phi).
\end{equation}
\end{proposition}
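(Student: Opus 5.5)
The plan is a single triangle-inequality split through the conditional expectation of the refined prediction, followed by Jensen's inequality for conditional expectations and the tower property. Write $u := g_\phi(x_{\mathcal{F}_1})$, $v := g_\phi(x_{\mathcal{F}_2})$, $m_1 := E[Y\mid\mathcal{F}_1]$, $m_2 := E[Y\mid\mathcal{F}_2]$. Insert and subtract the intermediate quantity $E[v\mid\mathcal{F}_1]$:
\begin{equation*}
u - m_1 = \bigl(u - E[v\mid\mathcal{F}_1]\bigr) + \bigl(E[v\mid\mathcal{F}_1] - m_1\bigr).
\end{equation*}
Apply the elementary inequality $\|a+b\|^2 \le 2\|a\|^2 + 2\|b\|^2$ pointwise and take expectations. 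The first term then contributes exactly $2\,\mathcal{V}(g_\phi)$, by definition of the martingale violation.

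For the second term, I will use the martingale identity~\eqref{eq:martingale} (the tower property) in the form $m_1 = E[m_2\mid\mathcal{F}_1]$, which is valid since $\mathcal{F}_1\subseteq\mathcal{F}_2$. This gives
\begin{equation*}
E[v\mid\mathcal{F}_1] - m_1 = E[\,v - m_2 \mid \mathcal{F}_1\,].
\end{equation*}
Conditional Jensen's inequality, applied to the convex function $\|\cdot\|^2$, yields $\|E[v-m_2\mid\mathcal{F}_1]\|^2 \le E[\|v-m_2\|^2\mid\mathcal{F}_1]$. Taking outer expectations and using the tower property once more bounds the second term by $R(\mathcal{F}_2; g_\phi)$. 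Combining the two pieces gives the claimed inequality $2\mathcal{V} + 2R$.

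The only delicate step is the second one. It requires that conditioning on $x_{\mathcal{F}_1}$ and on $\mathcal{F}_1$ be interpreted as the same $\sigma$-algebra, so that $u$ is $\mathcal{F}_1$-measurable and the tower property applies. It also requires finite second moments, so that all conditional expectations exist and Jensen's inequality is legitimate; both of these are given by the hypotheses. I would state these measurability conventions explicitly at the start.

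As an optional remark, a sharper constant is available. Since $E[v\mid\mathcal{F}_1]-m_1$ is $\mathcal{F}_1$-measurable, one could attempt an orthogonal (Pythagorean) decomposition. However, $u - E[v\mid\mathcal{F}_1]$ is not orthogonal to it in general, so the factor-2 bound is the natural clean statement to prove.
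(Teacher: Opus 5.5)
Your proof is correct and follows the paper's argument step for step: you insert $E[g_\phi(x_{\mathcal{F}_2})\mid\mathcal{F}_1]$, apply $\|a+b\|^2\le 2\|a\|^2+2\|b\|^2$, use the tower identity $E[E[Y\mid\mathcal{F}_2]\mid\mathcal{F}_1]=E[Y\mid\mathcal{F}_1]$, and bound the second term by conditional Jensen followed by the tower property. Two minor points: the $\mathcal{F}_1$-measurability of $u$ is never actually used (the first term equals $\mathcal{V}(g_\phi)$ by definition), and your closing remark asserts that a sharper constant is available but then explains why the orthogonality argument for it fails, so it is best dropped.
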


\begin{proof}
By the tower property, $E[E[Y|\mathcal{F}_2]|\mathcal{F}_1] = E[Y|\mathcal{F}_1]$. Decompose
\begin{equation*}
g_\phi(x_{\mathcal{F}_1}) - E[Y|\mathcal{F}_1]
= \underbrace{\bigl(g_\phi(x_{\mathcal{F}_1}) - E[g_\phi(x_{\mathcal{F}_2})|\mathcal{F}_1]\bigr)}_{A}
+ \underbrace{E\bigl[g_\phi(x_{\mathcal{F}_2}) - E[Y|\mathcal{F}_2]\mid\mathcal{F}_1\bigr]}_{B}.
\end{equation*}
Applying $\|a+b\|^2 \leq 2\|a\|^2 + 2\|b\|^2$ and taking expectations gives $E[\|A\|^2] = \mathcal{V}(g_\phi)$. For term $B$, Jensen's inequality yields $\|B\|^2 \leq E[\|g_\phi(x_{\mathcal{F}_2})-E[Y|\mathcal{F}_2]\|^2|\mathcal{F}_1]$, whose expectation equals $R(\mathcal{F}_2;\,g_\phi)$ by the tower property.
\end{proof}

When the model fits the refined view well ($R(\mathcal{F}_2;\,g_\phi)$ small), reducing $\mathcal{V}(g_\phi)$ is the decisive lever for coarse-level accuracy. In the limiting case $R(\mathcal{F}_2;\,g_\phi)=0$, the bound reduces to $E[\|g_\phi(x_{\mathcal{F}_1})-E[Y|\mathcal{F}_1]\|^2] \leq 2\,\mathcal{V}(g_\phi)$.

\subsection{Linear-Gaussian closed form}
\label{app:theory:linear}

To give geometric intuition for the inductive bias, we characterize the minimizer of the joint objective in a linear-Gaussian setting.

\paragraph{Setup.} Let $X = (X_1^\top, X_2^\top)^\top \sim \mathcal{N}(0, \Sigma)$ with $\Sigma = \bigl(\begin{smallmatrix}\Sigma_{11}&\Sigma_{12}\\\Sigma_{21}&\Sigma_{22}\end{smallmatrix}\bigr)$, and $Y = w_1^\top X_1 + w_2^\top X_2 + \varepsilon$, $\varepsilon\sim\mathcal{N}(0,\sigma^2)$ independent. The coarse view observes $X_1$ and the refined view observes the full $X$. For linear predictors $g_\beta(x)=\beta_1^\top x_1+\beta_2^\top x_2$ applied to whichever components are observed, the martingale loss simplifies to $\mathcal{L}_{\mathrm{mart}}(\beta) = \beta_2^\top C\,\beta_2$ where $C := \Sigma_{21}\Sigma_{11}^{-1}\Sigma_{12} \succeq 0$.

\begin{proposition}[Linear-Gaussian minimizer]
\label{prop:linear_gaussian}
Let $\Sigma_{2|1} := \Sigma_{22} - C$. The unique minimizer of $\mathcal{L}_{\mathrm{pred}}(\beta) + \lambda\,\mathcal{L}_{\mathrm{mart}}(\beta)$ is
\begin{equation}
\beta_2^*(\lambda) = \bigl(\Sigma_{2|1} + \lambda C\bigr)^{-1}\Sigma_{2|1}\,w_2,
\qquad
\beta_1^*(\lambda) = w_1 + \Sigma_{11}^{-1}\Sigma_{12}\bigl(w_2 - \beta_2^*(\lambda)\bigr).
\end{equation}
\end{proposition}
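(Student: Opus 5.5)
The plan is to write the joint objective explicitly as a quadratic form in $\beta=(\beta_1,\beta_2)$, decouple it using the Schur complement of $\Sigma$, and minimize first over $\beta_1$ and then over $\beta_2$.

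\emph{Predictive term.} Following Section~\ref{sec:instantiation}, I take $\mathcal{L}_{\mathrm{pred}}$ to be the squared loss on the refined (full) view:
\[
\mathcal{L}_{\mathrm{pred}}(\beta)=E\bigl[(Y-\beta_1^\top X_1-\beta_2^\top X_2)^2\bigr].
\]
With $\delta:=\beta-w$ and $\varepsilon$ independent of $X$, this equals $\delta^\top\Sigma\,\delta+\sigma^2$.

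\emph{Martingale term.} The coarse prediction is $\beta_1^\top X_1$. Gaussian conditioning gives $E[X_2\mid X_1]=\Sigma_{21}\Sigma_{11}^{-1}X_1$, so the martingale residual is $-\beta_2^\top\Sigma_{21}\Sigma_{11}^{-1}X_1$. Its second moment is $\beta_2^\top C\beta_2$, which confirms the stated form of $\mathcal{L}_{\mathrm{mart}}$.

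\emph{Schur decomposition.} Completing the square in $\delta_1$ gives
\[
\delta^\top\Sigma\,\delta=(\delta_1+\Sigma_{11}^{-1}\Sigma_{12}\delta_2)^\top\Sigma_{11}(\delta_1+\Sigma_{11}^{-1}\Sigma_{12}\delta_2)+\delta_2^\top\Sigma_{2|1}\delta_2 .
\]
The martingale term does not depend on $\beta_1$. Since $\Sigma_{11}\succ 0$, the minimization over $\beta_1$ for fixed $\beta_2$ has the unique solution $\delta_1=-\Sigma_{11}^{-1}\Sigma_{12}\delta_2$. This is exactly $\beta_1^*=w_1+\Sigma_{11}^{-1}\Sigma_{12}(w_2-\beta_2)$.

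\emph{Reduced problem in $\beta_2$.} Substituting back leaves
\[
(\beta_2-w_2)^\top\Sigma_{2|1}(\beta_2-w_2)+\lambda\,\beta_2^\top C\beta_2 .
\]
Setting the gradient to zero gives $\Sigma_{2|1}(\beta_2-w_2)+\lambda C\beta_2=0$, that is, $(\Sigma_{2|1}+\lambda C)\beta_2=\Sigma_{2|1}w_2$.

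\emph{Uniqueness.} I assume $\Sigma\succ 0$ and $\lambda\ge 0$. Then $\Sigma_{2|1}\succ 0$ and $C\succeq 0$, so $\Sigma_{2|1}+\lambda C\succ 0$. The reduced objective is therefore strictly convex and the linear system has a unique solution. The overall objective is also strictly convex in $\beta$, because its Hessian is $2\Sigma$ plus a positive semidefinite block. Hence the stationary point is the unique global minimizer.

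The main obstacle is pinning down conventions rather than the algebra. First, the statement does not specify which view $\mathcal{L}_{\mathrm{pred}}$ uses; I follow the experimental pipeline and take the full view, which is consistent with the claimed formula. Second, $\Sigma\succ 0$ must be assumed for the inverses to exist. A sanity check on the limits: $\lambda=0$ recovers $\beta=w$, and $\lambda\to\infty$ forces $C\beta_2\to 0$, which pushes $\beta_1$ toward the coarse Bayes predictor $w_1+\Sigma_{11}^{-1}\Sigma_{12}w_2$.
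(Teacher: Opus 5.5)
Your proposal is correct and takes essentially the same route as the paper. Both eliminate $\beta_1$ through its optimality condition $\beta_1=w_1+\Sigma_{11}^{-1}\Sigma_{12}(w_2-\beta_2)$ and then solve $(\Sigma_{2|1}+\lambda C)\beta_2=\Sigma_{2|1}w_2$; your Schur-complement completion of the square is the same computation as the paper's substitution of $\nabla_{\beta_1}=0$ into $\nabla_{\beta_2}=0$, and your check that $\mathcal{L}_{\mathrm{mart}}=\beta_2^\top C\beta_2$ together with your strict-convexity argument under $\Sigma\succ 0$, $\lambda\ge 0$ supplies the justification for ``unique'' that the paper leaves implicit.
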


\begin{proof}
The predictive loss is $(\beta-w)^\top\Sigma(\beta-w)+\sigma^2$. Setting $\nabla_{\beta_1}=0$ gives $\beta_1^*=w_1-\Sigma_{11}^{-1}\Sigma_{12}(\beta_2^*-w_2)$. Substituting into $\nabla_{\beta_2}=0$: the cross term $2\Sigma_{21}(\beta_1^*-w_1)$ becomes $-2\Sigma_{21}\Sigma_{11}^{-1}\Sigma_{12}(\beta_2^*-w_2)=-2C(\beta_2^*-w_2)$, leaving $-2C(\beta_2^*-w_2)+2\Sigma_{22}(\beta_2^*-w_2)+2\lambda C\beta_2^*=0$. Using $\Sigma_{22}-C=\Sigma_{2|1}$ yields $(\Sigma_{2|1}+\lambda C)\beta_2^*=\Sigma_{2|1}w_2$.
\end{proof}

At $\lambda=0$, $\beta^*(0)=w$ (OLS). As $\lambda\to\infty$, $\beta_2^*\to 0$ and $\beta_1^*\to w_1+\Sigma_{11}^{-1}\Sigma_{12}w_2$, which is exactly the coefficient of the optimal coarse predictor $E[Y|X_1]$. For finite $\lambda$, martingale regularization shrinks $\beta_2^*$ along directions of highest correlation with $X_1$ (eigenstructure of $C$) and adjusts $\beta_1^*$ to compensate, progressively recovering the optimal conditional expectation at the coarse level.

\section{Additional Experiments and Results}
\label{app:additional_experiments}

\subsection{Lambda sensitivity study}
\label{app:additional_experiments:sensitivity}
\begin{figure}[b!]
  \centering
  \includegraphics[width=\linewidth]{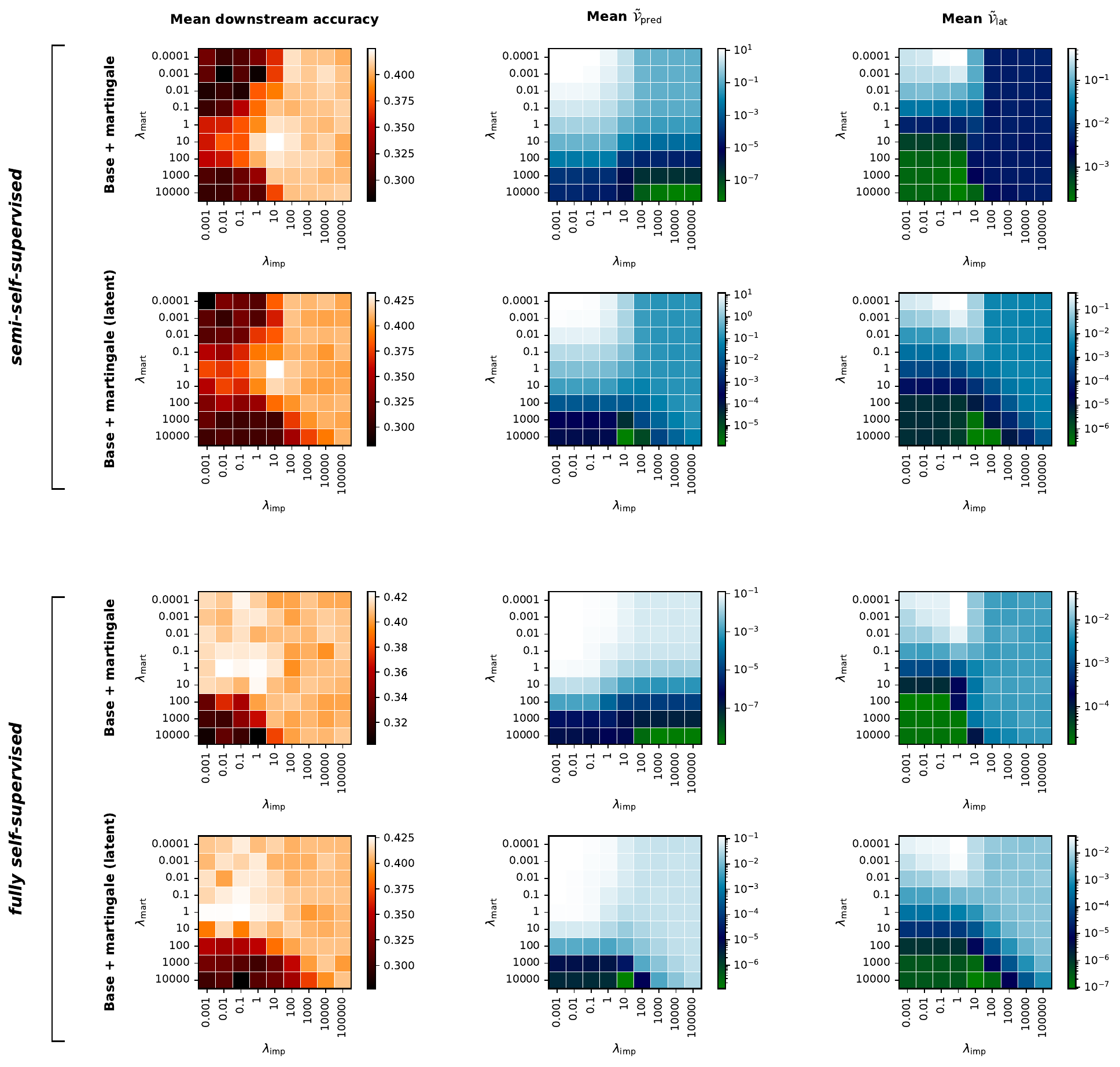}
  \caption{Lambda sensitivity study on S-SIM.
  Top block: Semi-self-supervised setting.
  Bottom block: fully self-supervised setting.
  Within each block, rows summarize Base SSL models with martingale regularization in prediction- and latent-space; columns show mean downstream accuracy, mean prediction-space martingale violation, and mean latent-space martingale violation. Values are averaged across 3 runs and completeness levels $c\in\{0.8,0.6,0.4,0.2,0.05\}$.}
  \label{fig:lambda_sensitivity_static_combined}
\end{figure}

To study hyperparameter sensitivity, we perform a sensitivity study over
$(\lambda_{\mathrm{imp}},\lambda_{\mathrm{mart}})$ for the static tabular benchmark on the S-SIM dataset, in both semi-self-supervised and fully self-supervised settings.
Results are summarized in Fig.~\ref{fig:lambda_sensitivity_static_combined}.

Across both settings, the strongest downstream-accuracy regions are broadly similar: they concentrate around moderate imputation weight
($\lambda_{\mathrm{imp}}\approx 0.1$--$1$ in the fully self-supervised setting and around $\lambda_{\mathrm{imp}}\approx 10$ in the semi-self-supervised setting)
and moderate martingale weight ($\lambda_{\mathrm{mart}}\approx 1$--$10$).
Very small $\lambda_{\mathrm{imp}}$ values are comparatively less favorable in both settings, while larger $\lambda_{\mathrm{imp}}$ can partly compensate when $\lambda_{\mathrm{mart}}$ is small.

For martingale-violation panels, the dominant trend is along $\lambda_{\mathrm{mart}}$: increasing $\lambda_{\mathrm{mart}}$ drives both prediction-space and latent-space violations down, often several orders of magnitude across the sweep.
The effect of $\lambda_{\mathrm{imp}}$ is conditional on $\lambda_{\mathrm{mart}}$ rather than monotone on its own.
At low $\lambda_{\mathrm{mart}}$, violations tend to be highest around intermediate $\lambda_{\mathrm{imp}}$ values (often near $1$), whereas at high $\lambda_{\mathrm{mart}}$ the violations are low across a much broader $\lambda_{\mathrm{imp}}$ range.
For the prediction-space martingale regularization, higher $\lambda_{\mathrm{imp}}$ together with strong martingale regularization also tends to further reduce prediction-space violation, consistent with improved completion quality yielding more coherent refined predictions.

\subsection{Frozen martingale estimator bias diagnostic}
\label{app:additional_experiments:estimator_bias}
We compare two finite-sample estimators of the martingale penalty against a large-$K$ Monte Carlo reference. For a frozen model $g_\phi$, coarse view $x_{\mathcal{F}_1}$, and $u := g_\phi(x_{\mathcal{F}_1})$, the ideal martingale squared error is $L := \|u - \mu\|_2^2$ with $\mu := E[g_\phi(x_{\mathcal{F}_2})\mid x_{\mathcal{F}_1}]$. We approximate $\mu$ by a $K\!=\!128$ Monte Carlo average $\hat{\mu}$ and set $L_{\mathrm{ref}} := \|u-\hat{\mu}\|_2^2$. The two candidate estimators are the naive single-sample plug-in
\begin{equation}
\widehat{L}_{\mathrm{single}} := \left\|u-v_a\right\|_2^2,
\end{equation}
and the two-independent-sample construction
\begin{equation}
\widehat{L}_{\mathrm{two}} := \left(u-v_a\right)^\top\left(u-v_b\right),
\end{equation}
where $v_a, v_b$ are conditionally independent refinement samples. We train two model families on S-SIM---one with each estimator---and evaluate each frozen model with its native estimator; the diagnostic value is $|\hat{L}-L_{\mathrm{ref}}|$ averaged over test batches. This is repeated for both training settings and two refinement mechanisms: the learned imputer and an oracle conditional sampler from the simulated DGP (Figure~\ref{fig:estimator_bias_both_worlds}).

Across both refinement mechanisms, the two-sample estimator tracks the reference more closely than the naive plug-in, with the gap largest at lower completeness. The single-sample estimator is biased upward by $\mathrm{Var}(v_a\mid x_{\mathcal{F}_1})$, which grows when completeness is low; the two-sample construction eliminates this bias. The smaller discrepancy $|\hat{L}-L_{\mathrm{ref}}|$ therefore reflects bias reduction, not variance reduction.
\begin{figure}[h!]
  \vspace{0.3cm}
  \centering
  \includegraphics[width=\linewidth]{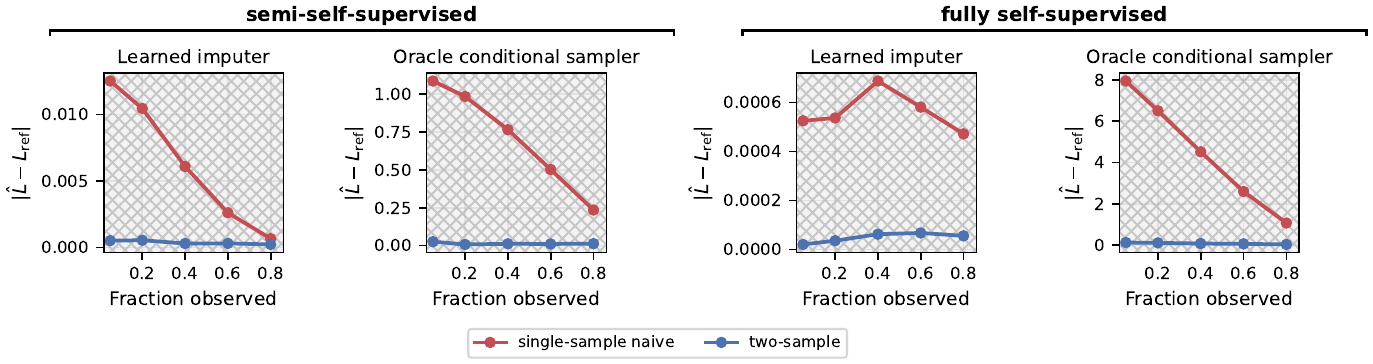}
  \caption{Frozen martingale estimator bias diagnostic.
  Results are shown for the static tabular benchmark on the S-SIM dataset averaged over 5 runs.
  The x-axis is the observed fraction of the coarse view.
  The y-axis is the absolute discrepancy $|\hat{L}-L_{\mathrm{ref}}|$ between a finite-sample martingale error estimator $\hat{L}$
  (naive single-sample vs two-sample) and a $K\!=\!128$ Monte Carlo reference $L_{\mathrm{ref}}$.
  Panels are split by training setting (semi-self-supervised vs fully self-supervised) and by the refinement sampling mechanism
  (learned imputer vs oracle conditional sampler)}
  \label{fig:estimator_bias_both_worlds}
\end{figure}

\subsection{Computational overhead}
\label{app:additional_experiments:compute_overhead}
Figure~\ref{fig:compute_overhead_hist} shows that overhead increases are generally small across runtime, peak GPU memory, and parameter count. Parameter growth is particularly limited, which is consistent with the slim implementation of the additional imputation component. Runtime and memory rise only mildly in most comparisons, indicating that the added regularization remains lightweight in practice.

\begin{figure}[h!]
  \centering
  \includegraphics[width=\linewidth]{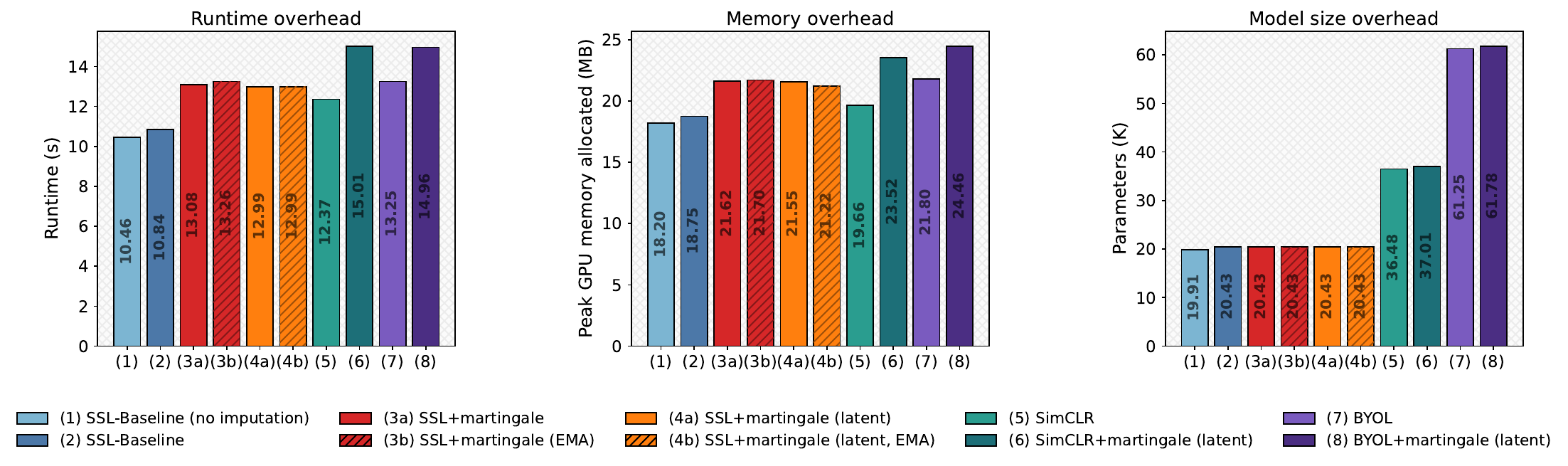}
  \caption{Computational overhead comparison across methods. From left to right, panels report runtime, peak GPU memory allocation, and total parameter count.}
  \label{fig:compute_overhead_hist}
\end{figure}

\FloatBarrier

\section{TCGA Multi-Omics Survival Benchmark}
\label{app:tcga}

We additionally test martingale-consistency regularization in a biomedical setting with intrinsic missingness. The benchmark uses per-cancer cohorts from The Cancer Genome Atlas (TCGA), where patients are represented by multiple omics modalities, including RNA expression, DNA methylation, copy-number variation, reverse-phase protein arrays (RPPA), and miRNA when available. Unlike the controlled missingness experiments in the main text, TCGA patients are not fully observed and are typically measured only on a patient-specific subset of modalities. We therefore use this experiment as a complementary stress test: the missingness pattern used at evaluation time is the cohort's native modality availability, not an artificially simulated test mask.

The base model is MIND \cite{Xing2025_MIND}, a multimodal variational autoencoder for integrating partially observed multi-omics profiles into a shared patient-level representation. For each cancer type, we train a MIND model, extract the posterior-mean patient embeddings, and fit a Coxnet survival model \cite{Pölsterl2020} using those embeddings together with age and gender covariates. Performance is measured by the C-index from 5-fold cross-validation. We report average results across 3 random-seed repetitions per cancer and variant.

This setting changes the interpretation of the refined information set. In the main experiments, the refined view $\mathcal{F}_2$ is the fully observed input. For TCGA, the fully observed multimodal profile is not available for most patients. We therefore define $\mathcal{F}_2=\mathcal{F}_{M_i}$ as the patient's maximally available view, where $M_i$ is the set of modalities observed for patient $i$. During training, for patients with $|M_i|=n_i\geq2$, we sample $k_i\sim\mathrm{Uniform}\{1,\ldots,n_i-1\}$ and then sample a uniform subset $M_i'\subset M_i$ of size $k_i$, yielding a nested coarse view $\mathcal{F}_{M_i'}\subset\mathcal{F}_{M_i}$. Patients with a single available modality contribute to the base MIND objective but are excluded from the martingale reduction, since the proper-coarsening term is degenerate. Thus this experiment asks whether martingale regularization improves representation learning under further coarsening of already partially observed real data.

We train six models: \emph{MIND}; \emph{MIND (with imputation)}, which uses the auxiliary imputation loss but not the martingale term; and four martingale-regularized variants with prediction- or latent-space losses, with and without EMA targets. MIND's modality-specific encoders and decoders provide the imputation mechanism used to form two independent stochastic refinements. We set $\lambda_{\mathrm{imp}}=\lambda_{\mathrm{mart}}=1$ and use EMA decay $0.999$ for the EMA variants.

\paragraph{Results.}
Table~\ref{tab:tcga_survival} reports wins against \emph{MIND}: for each method, the final row counts how many of the $17$ cohorts have a strictly higher C-index than \emph{MIND}. \emph{MIND (with imputation)} does so in $6$ cohorts. The four martingale variants do so in $8$--$13$ cohorts, with prediction-space EMA ($13$) most often, then prediction-space ($12$) and latent-space EMA ($12$), and latent space without EMA ($8$). 

Overall, the TCGA study supports the same qualitative conclusion as the main benchmarks in a different regime. Martingale consistency is not being used here to refine a simulated partial view into a fully observed sample; instead, it regularizes the coherence between a further-coarsened subset of the patient's observed modalities and that patient's own maximally available multimodal profile. These win rates suggest that martingale regularization is often helpful under intrinsic missingness, while the spread across variants underscores that the best martingale space and target-network choice can be application dependent.

\begin{table*}[t!]
  \centering
  \footnotesize
  \caption{TCGA multi-omics survival benchmark. Entries are 5-fold Coxnet C-index values computed from MIND patient embeddings, shown as mean $\pm$ SEM over three training seeds. The final row counts cohort wins against \emph{MIND} (strictly higher C-index).}
  \label{tab:tcga_survival}
  \begingroup
  \setlength{\tabcolsep}{3.5pt}
  \setlength{\extrarowheight}{1.5pt}
  \begin{tabularx}{\textwidth}{l P R Q Q Q Q}
    \toprule
    \rowcolor{GainRowGray}
    Cancer & MIND & \makecell[tc]{MIND (with\\imputation)} & \makecell[tc]{MIND + mart.} & \makecell[tc]{MIND + mart.\\(EMA)} & \makecell[tc]{MIND + mart.\\(latent)} & \makecell[tc]{MIND + mart.\\(latent, EMA)} \\
    \midrule
    BLCA & 0.6185{\taberr{.0039}} & 0.6213{\taberr{.0035}} & \tcgamart{\textbf{0.6247{\taberr{.0048}}}} & \tcgamart{0.6233{\taberr{.0048}}} & \tcgamart{0.6121{\taberr{.0047}}} & \tcgamart{0.5872{\taberr{.0025}}} \\
    BRCA & 0.6409{\taberr{.0077}} & \textbf{0.6507{\taberr{.0050}}} & \tcgamart{0.6358{\taberr{.0053}}} & \tcgamart{0.6448{\taberr{.0050}}} & \tcgamart{0.6407{\taberr{.0017}}} & \tcgamart{0.6149{\taberr{.0033}}} \\
    CESC & 0.6191{\taberr{.0101}} & 0.5654{\taberr{.0053}} & \tcgamart{0.6138{\taberr{.0242}}} & \tcgamart{0.5966{\taberr{.0079}}} & \tcgamart{0.6119{\taberr{.0056}}} & \tcgamart{\textbf{0.6447{\taberr{.0098}}}} \\
    COAD & 0.5188{\taberr{.0066}} & 0.5376{\taberr{.0192}} & \tcgamart{0.5441{\taberr{.0210}}} & \tcgamart{0.4913{\taberr{.0047}}} & \tcgamart{0.5351{\taberr{.0109}}} & \tcgamart{\textbf{0.5739{\taberr{.0013}}}} \\
    COADREAD & 0.5780{\taberr{.0092}} & 0.5590{\taberr{.0128}} & \tcgamart{0.5837{\taberr{.0116}}} & \tcgamart{0.5855{\taberr{.0054}}} & \tcgamart{0.5732{\taberr{.0035}}} & \tcgamart{\textbf{0.6146{\taberr{.0026}}}} \\
    GBM & 0.6426{\taberr{.0045}} & 0.6449{\taberr{.0037}} & \tcgamart{0.6533{\taberr{.0014}}} & \tcgamart{0.6501{\taberr{.0023}}} & \tcgamart{\textbf{0.6564{\taberr{.0031}}}} & \tcgamart{0.6541{\taberr{.0020}}} \\
    HNSC & 0.6263{\taberr{.0062}} & 0.6240{\taberr{.0002}} & \tcgamart{0.6009{\taberr{.0018}}} & \tcgamart{0.6039{\taberr{.0048}}} & \tcgamart{0.6160{\taberr{.0012}}} & \tcgamart{\textbf{0.6324{\taberr{.0011}}}} \\
    KIRC & 0.7102{\taberr{.0031}} & 0.7097{\taberr{.0029}} & \tcgamart{\textbf{0.7199{\taberr{.0026}}}} & \tcgamart{0.7167{\taberr{.0017}}} & \tcgamart{0.7028{\taberr{.0007}}} & \tcgamart{0.7065{\taberr{.0012}}} \\
    LGG & 0.8422{\taberr{.0026}} & 0.8394{\taberr{.0016}} & \tcgamart{0.8476{\taberr{.0004}}} & \tcgamart{\textbf{0.8494{\taberr{.0011}}}} & \tcgamart{0.8488{\taberr{.0010}}} & \tcgamart{0.8472{\taberr{.0027}}} \\
    LIHC & 0.6063{\taberr{.0109}} & 0.6043{\taberr{.0115}} & \tcgamart{\textbf{0.6401{\taberr{.0011}}}} & \tcgamart{0.6236{\taberr{.0025}}} & \tcgamart{0.6352{\taberr{.0071}}} & \tcgamart{0.6100{\taberr{.0243}}} \\
    LUAD & 0.6045{\taberr{.0043}} & 0.5905{\taberr{.0056}} & \tcgamart{0.5834{\taberr{.0031}}} & \tcgamart{\textbf{0.6102{\taberr{.0153}}}} & \tcgamart{0.5965{\taberr{.0036}}} & \tcgamart{0.5529{\taberr{.0015}}} \\
    LUSC & 0.5253{\taberr{.0088}} & 0.5397{\taberr{.0100}} & \tcgamart{0.5414{\taberr{.0076}}} & \tcgamart{0.5386{\taberr{.0042}}} & \tcgamart{0.5398{\taberr{.0067}}} & \tcgamart{\textbf{0.5467{\taberr{.0025}}}} \\
    OV & 0.5983{\taberr{.0079}} & 0.5966{\taberr{.0048}} & \tcgamart{0.5809{\taberr{.0051}}} & \tcgamart{0.5941{\taberr{.0040}}} & \tcgamart{0.5977{\taberr{.0033}}} & \tcgamart{\textbf{0.5994{\taberr{.0007}}}} \\
    SKCM & 0.6461{\taberr{.0032}} & 0.6364{\taberr{.0028}} & \tcgamart{0.6522{\taberr{.0011}}} & \tcgamart{0.6631{\taberr{.0003}}} & \tcgamart{\textbf{0.6641{\taberr{.0013}}}} & \tcgamart{0.5994{\taberr{.0028}}} \\
    STAD & 0.5619{\taberr{.0093}} & 0.5635{\taberr{.0116}} & \tcgamart{0.5749{\taberr{.0090}}} & \tcgamart{\textbf{0.5788{\taberr{.0057}}}} & \tcgamart{0.5785{\taberr{.0031}}} & \tcgamart{0.5730{\taberr{.0019}}} \\
    THCA & 0.9336{\taberr{.0054}} & 0.9326{\taberr{.0067}} & \tcgamart{0.9438{\taberr{.0008}}} & \tcgamart{\textbf{0.9515{\taberr{.0018}}}} & \tcgamart{0.9303{\taberr{.0022}}} & \tcgamart{0.9479{\taberr{.0028}}} \\
    UCEC & 0.6281{\taberr{.0086}} & 0.6259{\taberr{.0065}} & \tcgamart{0.6483{\taberr{.0108}}} & \tcgamart{\textbf{0.6509{\taberr{.0121}}}} & \tcgamart{0.6413{\taberr{.0034}}} & \tcgamart{0.6491{\taberr{.0038}}} \\
    \relgaindash{1-7}
    \rowcolor{GainRowGray}\textit{Wins / 17} & -- & 6 & 12 & 13 & 8 & 12 \\
    \bottomrule
  \end{tabularx}
  \endgroup
\end{table*}

\section{Data Descriptions}
\label{app:data}

\subsection{Simulated Data}
\paragraph{Synthetic time-series data (right-censored \emph{T-SIM-RC}).}
For the right-censored benchmark, we generate synthetic multivariate time series from a causal linear dynamical system with a bounded nonlinearity,
\begin{equation}
X_0 \sim \mathcal{N}(0, \Sigma_0),\qquad
X_t = A X_{t-1} + \varepsilon_t + \alpha\,\tanh\bigl(A X_{t-1} + \varepsilon_t\bigr), \quad t = 1,\ldots,T,
\end{equation}
where $\varepsilon_t \sim \mathcal{N}(0,\sigma^2 I_D)$ are i.i.d., $A \in \mathbb{R}^{D \times D}$ is a stable lower-triangular transition matrix, and $\Sigma_0$ is the stationary covariance of the linear part. In the reported setting we use $T=16$, $D=16$, noise scale $\sigma=0.2$, nonlinearity strength $\alpha=0.2$, and five classes, with labels defined as $y=\arg\max_{k \in \{1,\ldots,5\}} (W_{\mathrm{RC}} u)_k$, where $u$ is a weighted combination of the final $W=5$ time steps and $W_{\mathrm{RC}}$ is a fixed linear map from that summary vector to five logits. This construction  concentrates label information near the end of the sequence, so right-censoring removes the most predictive timesteps first.

\paragraph{Synthetic time-series data (\emph{T-SIM}).}
For the structured temporal benchmark, we use a different non-Markov generator with independent latent factors at each time step:
\begin{equation}
z_t \sim \mathcal{N}(0, I_k),\qquad
X_t = B_t z_t + \varepsilon_t + \alpha\,\tanh\bigl(B_t z_t + \varepsilon_t\bigr), \quad t = 1,\ldots,T,
\end{equation}
where $B_t \in \mathbb{R}^{D \times k}$ is a time-indexed projection, $\varepsilon_t \sim \mathcal{N}(0,\sigma^2 I_D)$, and the latent dimension is $k=8$. In the reported setting we use $T=16$, $D=32$, noise scale $\sigma=0.2$, nonlinearity strength $\alpha=0.2$, and five classes. Labels use the frames at time indices $\{8,9,10,15\}$: we form a $D$-dimensional summary by averaging those four frames coordinatewise, apply $\tanh$ to that summary coordinatewise, zero out all but $|S|=5$ feature dimensions using a fixed mask determined by the simulation seed (leaving only those coordinates informative for the label), and set $y=\arg\max_{k \in \{1,\ldots,5\}} (W_{\mathrm{ST}}\tilde{x})_k$ for a fixed random matrix $W_{\mathrm{ST}}\in\mathbb{R}^{5\times D}$ and masked summary $\tilde{x}\in\mathbb{R}^D$.

\paragraph{Synthetic static tabular dataset (\emph{S-SIM}).}
\emph{S-SIM} is the synthetic static tabular dataset used in our experiments. Each sample is a feature vector, and partial observation is induced by feature masking. Data generation uses one shared standard normal factor $g$ and independent standard normal pair factors $(f_j)_{j=1}^{|S|}$ aligned with a fixed predictive subset $S$ of coordinates. Each predictive coordinate loads on $(g,f_j)$ with fixed affine coefficients and Gaussian noise; up to $|S|$ additional \emph{proxy} coordinates load on the same factors (so predictive information can be partially recoverable when $S$ is masked); all remaining coordinates are independent Gaussian noise. We then apply additive Gaussian measurement noise with scale $\sigma$ and bounded nonlinearity strength $\alpha$, using the same $(\sigma,\alpha)$ as for \emph{T-SIM-RC} and \emph{T-SIM}. Labels are defined by a linear multiclass score on $S$, $y=\arg\max_{k \in \{1,\ldots,K\}} (W x)_k$, with class-weight matrix $W \in \mathbb{R}^{K \times D}$ supported only on $S$. In the reported setting we use $D=16$, $K=5$, and $|S|=5$.

\subsection{Real and image data}
The real-world time-series datasets are \emph{UCI HAR (HAR)} \cite{Anguita2013_UCIHAR}, \emph{Traffic (TRAF)} (PEMS-SF) \cite{Cuturi2011_PEMSSF}, \emph{Stock (STK)}\footnote{Stock histories were downloaded using the \texttt{yfinance} package for Yahoo Finance market data: \url{https://pypi.org/project/yfinance/}.}, \emph{Character Trajectories (CT)} \cite{Williams2006_CharacterTrajectories}, and \emph{Spoken Arabic Digits (SAD)} \cite{BeddaHammami2008_SpokenArabicDigit}. These are multivariate sequence-classification problems that differ substantially in sequence length, dimensionality, and domain. The stock benchmark is generated from daily OHLCV histories for a fixed set of large-cap U.S. equities, using approximately the two years of data available at dataset-construction time. We then standardize each ticker series, extract sliding windows of length 128 with stride 32, and assign a binary label indicating whether the next closing price increases.

The static datasets are \emph{Adult (AD)} \cite{BeckerKohavi1996_Adult}, \emph{Bank Marketing (BM)} \cite{MoroCortezRita2014_BankMarketing}, \emph{Credit-g (CG)} \cite{Hofmann1994_CreditG}, and \emph{Phoneme (PH)} \cite{HastieBujaTibshirani1995_Phoneme}. These are tabular classification tasks used to test whether the same martingale-consistency story remains meaningful when nested information is induced by feature revelation rather than temporal continuation. The image benchmarks are CIFAR-10 \cite{Krizhevsky2009_CIFAR10} and STL-10 \cite{Gordon2011_STL-10}, both treated as $32 \times 32$ color-image classification problems under patch masking, where the coarse view reveals only a subset of image patches and the refined view is the full image.

\subsection{Dataset summary}

\begin{table*}[b!]
  \centering
  \caption{Benchmark dataset summary.}
  \label{tab:dataset_summary}
  \begin{tabularx}{\textwidth}{@{}l r r r c X@{}}
    \toprule
    Dataset & Samples & Classes & Features & Cat./Cont. & Domain \\
    \midrule
    \multicolumn{6}{@{}l}{\textbf{Time-series benchmarks}} \\
    T-SIM-RC & 6000 & 5 & 16 & 0 / 16 & Synthetic time series \\
    T-SIM & 6500 & 5 & 32 & 0 / 32 & Synthetic time series \\
    UCI HAR (HAR) & 10299 & 6 & 9 & 0 / 9 & Activity sensing \\
    Traffic (TRAF) & 440 & 2 & 963 & 0 / 963 & Transportation \\
    Stock (STK) & 180 & 2 & 5 & 0 / 5 & Finance \\
    Character Trajectories (CT) & 2858 & 20 & 3 & 0 / 3 & Handwriting \\
    Spoken Arabic Digits (SAD) & 8800 & 10 & 13 & 0 / 13 & Speech \\
    \midrule
    \multicolumn{6}{@{}l}{\textbf{Static tabular benchmarks}} \\
    S-SIM & 6500 & 5 & 16 & 0 / 16 & Synthetic tabular \\
    Adult (AD) & 48842 & 2 & 14 & 8 / 6 & Demographics \\
    Bank Marketing (BM) & 45211 & 2 & 16 & 10 / 6 & Marketing \\
    Credit-g (CG) & 1000 & 2 & 20 & 13 / 7 & Finance \\
    Phoneme (PH) & 5404 & 2 & 5 & 0 / 5 & Speech \\
    \midrule
    \multicolumn{6}{@{}l}{\textbf{Image benchmarks}} \\
    CIFAR-10 & 60000 & 10 & 32$\!\times\!$32$\!\times\!$3 & – & Natural images \\
    STL-10 & 13000 & 10 & 32$\!\times\!$32$\!\times\!$3 & – & Natural images \\
    \bottomrule
  \end{tabularx}
\end{table*}

Table~\ref{tab:dataset_summary}
summarizes the simulated, real, and image benchmarks used in the experiments. For the simulated benchmarks, T-SIM-RC uses a 3000/3000 train/test split. T-SIM and S-SIM use 3000 training examples, 500 held-out prior-mask-fitting examples, and 3000 test examples. For the real time-series and static tabular datasets, we use a 60\% / 10\% / 30\% train / prior-mask-fit / test split. For CIFAR-10 and STL-10, we use a 70\% / 30\% train/test split with no separate prior-mask-fitting partition; STL-10 images are resized from $96 \times 96$ to $32 \times 32$ for the image experiments.

\subsection{Partial-Observation Protocol}
\label{app:data:partial}
Across all benchmarks, downstream robustness is evaluated at observed-information levels $c \in \{0.05, 0.2, 0.4, 0.6, 0.8\}$ under benchmark-specific partial-observation processes. The source of these evaluation-time partial-observation patterns differs by benchmark family: right-censored time-series and image benchmarks use predefined schemes, while non-right-censored time-series and static tabular benchmarks use structured processes. Across all benchmarks, partial observation is coupled to task-relevant structure – making the refined view systematically more informative for the downstream task – precisely the regime where martingale consistency is most beneficial (Proposition~\ref{prop:excess_risk}).

\paragraph{Time-series benchmark (right-censored).}
For this benchmark, the partially observed view is obtained by right-censoring the time axis after an observed prefix. Writing $L \in \{0,\ldots,T\}$ for the observed prefix length, the mask is
\begin{equation}
M_{t,d}=\mathbf{1}\{t \le L\}, \qquad t=1,\ldots,T,\ d=1,\ldots,D.
\end{equation}
The complete input acts as the refined target view. This construction directly matches the interpretation of information refinement through increasing history length.
In \emph{T-SIM-RC}, label information is concentrated near sequence end by construction, so right-censoring preferentially removes the most predictive timesteps and makes cross-view coherence particularly consequential for downstream robustness.

\paragraph{Time-series benchmark (non-right-censored).}
For the non-right-censored time-series benchmark, missingness is generated by a structured temporal process. For example $i$ and time index $t$, we use a low-rank logistic model
\begin{equation}
\Pr(M_{i,t}=1)=\sigma\!\bigl(b_t + s_i + z_i^\top \ell_t - \beta q_t + \delta(c)\bigr),
\end{equation}
where $q_t$ is a temporal-importance profile  (for T-SIM, derived from the synthetic construction as a binary indicator over label-relevant timesteps; for real time-series datasets, estimated on the SSL training split), $z_i \sim \mathcal{N}(0,I_r)$ is an example-level latent factor, $s_i \sim \mathcal{N}(0,\tau^2)$ is a severity term, and $\delta(c)$ is a scalar shift chosen so that the expected observed fraction matches target completeness $c$. The minus term $-\beta q_t$ explicitly couples missingness to temporal importance, so timesteps that are more predictive are also more likely to be missing under the target process. In the reported setup we use rank $r=3$, base-logit scale $0.04$, correlation-loading scale $0.05$, importance strength $\beta=5.0$, and severity standard deviation $\tau=0.12$.

\paragraph{Static tabular benchmark.}
The static tabular benchmark uses the same parameterization, with feature index $j$ replacing time index $t$:
\begin{equation}
\Pr(M_{i,j}=1)=\sigma\!\bigl(b_j + s_i + z_i^\top \ell_j - \beta q_j + \delta(c)\bigr).
\end{equation}
Here $q_j$ is a feature-importance profile (for \emph{S-SIM}, derived from the synthetic construction; for real static tabular datasets, estimated on the training split), and $z_i,s_i,\delta(c)$ play the same roles as above. The reported settings again use rank $3$, base scale $0.04$, correlation scale $0.05$, importance strength $5.0$, and severity standard deviation $0.12$.

\paragraph{Image benchmarks.}
For the image benchmarks, partial observation is induced by patch masking on a regular patch grid. Let $P_{i,p}\in\{0,1\}$ denote the patch-level observation mask for image $i$ and patch index $p$, with
\begin{equation}
\frac{1}{n_{\mathrm{patch}}}\sum_{p=1}^{n_{\mathrm{patch}}} P_{i,p} \approx c,
\end{equation}
where $c$ is the target observed fraction. In the reported CIFAR-10 and STL-10 experiments we use a center-biased masking scheme, implemented by sampling dropped square patch blocks with probabilities weighted toward the image center according to
\begin{equation}
w_p \propto \exp\!\left(-\gamma\,\widetilde d_p^{\,2}\right),
\qquad \gamma=3.0,
\end{equation}
where $\widetilde d_p^{\,2}$ denotes the squared distance of patch $p$ from the image center after normalization by the maximum squared distance on the patch grid. The image-level mask is obtained by expanding the patch mask over pixels, and the refined target view is always the complete image.

\section{Model Implementation Details}
\label{app:impl}

\subsection{Model architectures and objective families}
Across all reported experiments, training starts from a complete input $X$ and samples a mask $M$ to construct a coarse view. For temporal and tabular benchmarks, $X$ is an array in $\mathbb{R}^{T\times D}$, with static tabular data treated as the special case $T=1$, and the coarse input is $X\odot M$ with $M\in\{0,1\}^{T\times D}$. For image benchmarks, $X\in\mathbb{R}^{C\times H\times W}$ and $M$ is a sampled patch mask applied directly to the image. In all cases, the refined view used by the martingale branch is the complete input.
The base implementation consists of an encoder $f_\theta$, a prediction head $g_\phi$, and optionally an imputation module $q_\psi$. For temporal and tabular data, $f_\theta$ is a mask-aware encoder applied to masked sequence- or feature-inputs. For images, $f_\theta$ is a visual encoder applied to masked images; in the reported CIFAR-10 and STL-10 experiments, the semi-self-supervised and SimCLR models use a ResNet-18 \cite{He2016_ResNet} backbone. We write the resulting coarse and refined representations generically as
\begin{equation}
z_{\mathcal{F}_1}:=f_\theta(X\odot M),
\qquad
z_{\mathcal{F}_2}:=f_\theta(X),
\end{equation}
with temporal and tabular models implemented using a mask-aware encoder, while for images the mask is applied to the pixels before the encoder is called.

\paragraph{Base framework.}
The core benchmark families use two objective types. In the reconstruction-based setting, the prediction head predicts the complete input from the encoder representation, and the reconstruction term is evaluated on the complete input:
\begin{equation}
\mathcal{L}_{\mathrm{pred}}^{\mathrm{rec}}
:=
\frac{1}{|\Omega|}\left\|g_\phi^{\mathrm{rec}}\bigl(z_{\mathcal{F}_2}\bigr)-X\right\|_2^2,
\end{equation}
where $|\Omega|$ denotes the number of input coordinates or pixels. In the semi-self-supervised setting, the prediction head maps the representation to class logits and the classification term is cross-entropy on the complete input:
\begin{equation}
\mathcal{L}_{\mathrm{pred}}^{\mathrm{cls}}
:=
- \log \frac{\exp\bigl(g_\phi^{\mathrm{cls}}(z_{\mathcal{F}_2})_Y\bigr)}{\sum_{k=1}^K \exp\bigl(g_\phi^{\mathrm{cls}}(z_{\mathcal{F}_2})_k\bigr)}.
\end{equation}
Thus, in the base framework these prediction terms are evaluated on the complete example, while the masked view enters through the imputation and martingale-consistency terms. In the reported time-series and static tabular experiments, the semi-self-supervised models use supervised classification and the fully self-supervised base models use the reconstruction objective. 
For time-series and static tabular classification, the prediction head acts on masked-mean pooled encoder representations; for image classification, it acts on the direct image encoder output. For reconstruction, the prediction head acts on unpooled encoder outputs so that the complete input can be predicted coordinate-wise or pixel-wise.

\subsection{Imputation module}
Beyond these prediction or representation-learning terms, martingale-based models may use an imputation module $q_\psi$ that predicts the missing part of the input from the coarse-view representation. For temporal and tabular data, this means missing coordinates in the original sequence/feature domain; for images, it means missing pixels in the masked image. Writing $X_{\mathrm{imp}}=q_\psi(z_{\mathcal{F}_1})$, the imputation loss is
\begin{equation}
\mathcal{L}_{\mathrm{imp}}(\theta,\psi;X,M)
 :=
\frac{\displaystyle \sum_{j}(1-M_j)\bigl((X_{\mathrm{imp}})_{j}-X_{j}\bigr)^2}
{\displaystyle \sum_{j}(1-M_j)+\varepsilon},
\end{equation}
where the index $j$ ranges over coordinates or pixels and $\varepsilon>0$ prevents division by zero. Observed entries are copied directly from the input, so the imputer is trained only on genuinely missing components. In the reported experiments this module serves two purposes: it contributes an explicit imputation term to the loss, and it generates stochastic completions for the martingale regularizer.

\subsection{Masking schemes during SSL model training}
\label{app:impl:masking}
Across all reported experiments, training is nested: the model receives complete inputs, but a coarse mask is deliberately sampled to construct the partial-information branch, while the refined branch remains the complete input. The coarse-view completeness is randomized batch-by-batch over $[0.05,1.0]$, covering both severe and mild masking regimes.

For structured temporal and static tabular benchmarks, training masks are not drawn directly from the evaluation-time process. Instead, we estimate feature- or timestep-importance scores on the SSL training split, generate calibration masks on the held-out prior-fitting partition at completeness $0.5$, fit a mask prior from those observed masks, and sample coarse-view masks from that prior during training. This approximates deployment-time missingness without reusing evaluation examples. For the right-censored temporal and image benchmarks, masks are drawn directly from the predefined masking family: right-censoring prefixes for \emph{T-SIM-RC} and center-biased patch masks for the image setting.

\subsection{Martingale-consistent objectives in the reported experiments}
\label{app:impl:mart}

In all reported experiments, $x_{\mathcal{F}_2}=x$ is the full observation and $x_{\mathcal{F}_1}=x \odot M$ is the coarse view. The two conditionally independent refinements required by Eq.~\eqref{eq:2mc} are produced by stochastic imputer-based completion, merging imputed missing entries with the observed coordinates. For time-series and tabular experiments, writing $r\in\{a,b\}$,
\begin{equation}
\tilde{x}_r := x_{\mathcal{F}_1}+\xi_r\odot(1-M),
\qquad
\tilde{z}_r := f_\theta(\tilde{x}_r),
\qquad
\hat{x}_r := x \odot M \;+\; q_\psi(\tilde{z}_r)\odot(1-M),
\end{equation}
where $\xi_a$ and $\xi_b$ are independent Gaussian noise draws. In image experiments, the imputer is fed a noisy encoder representation of $x_{\mathcal{F}_1}$ instead.

\paragraph{Prediction-space variant.}
Writing $z = f_\theta(x_{\mathcal{F}_1})$ and $z_a = f_\theta(\hat{x}_a)$, $z_b = f_\theta(\hat{x}_b)$ for the representations passed to the prediction head, coarse and refined predictions $u = g_\phi(z)$, $v_a = g_\phi(z_a)$, $v_b = g_\phi(z_b)$ are substituted into Eq.~\eqref{eq:2mc}, giving $\widehat{\mathcal{L}}_{\mathrm{mart,pred}}^{2\mathrm{MC}}$.

\paragraph{Latent-space variant.}
Latent-space variants skip the prediction head in the martingale term and apply Eq.~\eqref{eq:2mc} directly to the same coarse and refined representations, giving $\widehat{\mathcal{L}}_{\mathrm{mart,lat}}^{2\mathrm{MC}} := (z-z_a)^\top(z-z_b)$. The SimCLR and BYOL variants use only this latent-space construction.

\subsection{Training schedule and optimization}
The overall training loss has the form
\begin{equation}
\mathcal{L}_{\mathrm{total}}
=
\mathcal{L}_{\mathrm{pred}}
+
\lambda_{\mathrm{imp}}\,\mathcal{L}_{\mathrm{imp}}
+
\lambda_{\mathrm{mart}}^{\mathrm{eff}}\,\mathcal{L}_{\mathrm{mart}},
\end{equation}
where $\mathcal{L}_{\mathrm{mart}}$ is either the prediction-space or latent-space martingale penalty depending on the model variant. The effective martingale weight follows a warmup--ramp schedule:
\begin{equation}
\lambda_{\mathrm{mart}}^{\mathrm{eff}}(s)
=
\lambda_{\mathrm{mart}} \cdot \gamma(s),
\end{equation}
where $\gamma(s)=0$ during the warmup phase, then increases linearly to $1$ over the ramp interval, and remains equal to $1$ thereafter. This schedule is used to reduce early optimization instability.

\subsection{Training and Model Hyperparameters}
\label{app:impl:hyperparams}

\paragraph{Time-series and static tabular experiments.}
We select $\lambda_{\mathrm{imp}}$ and $\lambda_{\mathrm{mart}}$ using a grid search over $\lambda_{\mathrm{imp}}\in\{10^{-4},10^{-3},\ldots,10^{3}\}$ and $\lambda_{\mathrm{mart}}\in\{10^{-4},10^{-3},\ldots,10^{4}\}$. For each pair $(\lambda_{\mathrm{imp}},\lambda_{\mathrm{mart}})$, we train the martingale-consistent variant of each SSL model on the full training split; to score this pair, we fit a linear probe on $80\%$ of the training data and validate on the remaining $20\%$, and we keep the $(\lambda_{\mathrm{imp}},\lambda_{\mathrm{mart}})$ with the largest mean validation accuracy, averaged over completeness levels $c\in\{0.8,0.6,0.4,0.2,0.05\}$. The Base + imputation model reuses the selected $\lambda_{\mathrm{imp}}$ from the matched martingale model family. The remaining settings are shared across reported models and datasets. We use learning rate $10^{-3}$ and weight decay $10^{-4}$; training uses $500$ steps per run and batch size $256$. The encoder is a single MLP acting on the full feature vector at each time index. Its layer widths are $128$--$64$--$32$, with GELU activation and dropout $0.1$; representations are $32$-dimensional masked means over observed positions. In the semi-self-supervised setting, a single linear map reads off class logits from the pooled $32$-dimensional representation. In the fully self-supervised (reconstruction) and imputation objectives, we use linear readouts (shared across times), each a single map from hidden features to the input feature dimension. For martingale regularization, the training coarse view samples an observed-information rate in $[0.05,1.0]$ each batch. The martingale term uses $100$ warmup steps and a $400$-step linear ramp; EMA-based variants use decay $0.97$ (full-target EMA for latent martingale, head-only EMA for prediction-space martingale). Two stochastic refinements use Gaussian completion noise with scale $0.25$. In the SimCLR and BYOL tabular runs, the contrastive and bootstrap objectives are computed from two independently sampled partial views obtained through the same (fitted) masking mechanism as for martingale regularization. The projection MLP is $32\!\to\!128\!\to\!64$ with temperature $0.1$ for SimCLR, and BYOL adds a predictor MLP $64\!\to\!256\!\to\!64$ with target decay $0.99$. Downstream probes use a linear head, $10$ probe epochs, learning rate $10^{-2}$, weight decay $0$, and batch size $1000$; the probe is trained on full-observation embeddings and tested at $c\in\{0.8,0.6,0.4,0.2,0.05\}$. Both SSL pretraining and linear probe training use the AdamW optimizer \cite{LoshchilovHutter2019_AdamW}.

\paragraph{Image experiments.}
For image experiments, we implement the encoder as a ResNet-18 \cite{He2016_ResNet} trained from scratch: we use the standard convolutional stack through global average pooling and omit the final linear layer that would map the pooled feature vector to class logits on a $1000$-class ImageNet head, so the representation is the $512$-dimensional vector produced after pooling. CIFAR-10 is used at its native $32\!\times\!32$ resolution, and STL-10 is resized from $96\!\times\!96$ to $32\!\times\!32$. SSL pretraining uses batch size $256$ and runs for $50$ epochs with learning rate $10^{-3}$ and weight decay $10^{-4}$. Masks use $4\!\times\!4$ patches, center bias $3.0$, and we report at completeness $c\in\{0.8,0.6,0.4,0.2,0.05\}$. In the semi-self-supervised family, the Base model uses a linear classifier on encoder features; Base + imputation adds an MLP imputer with $\lambda_{\mathrm{imp}}=1$; and the latent-martingale variant uses the same imputer with $\lambda_{\mathrm{imp}}=1$ and $\lambda_{\mathrm{mart}}=0.01$, a $5$-epoch martingale warmup, a $12$-epoch linear ramp, and martingale clip $100$. The imputer maps $512$-dimensional latents to full $3\!\times\!32\!\times\!32$ pixels and uses refinement noise scale $0.05$. SimCLR uses standard full-image augmentations (random resized crop in scale $[0.2,1.0]$, horizontal flip with probability $0.5$, grayscale with probability $0.2$), a projection head $512\!\to\!512\!\to\!128$, and temperature $0.2$ \cite{Oord2018_CPC,Chen2020_SimCLR}; SimCLR + imputation uses $\lambda_{\mathrm{imp}}=1$, and SimCLR + imputation + latent martingale uses $\lambda_{\mathrm{imp}}=1$ and $\lambda_{\mathrm{mart}}=1$ with the same warmup, ramp, and martingale clip settings. The downstream probe is trained on unmasked features with a linear head for $25$ epochs, learning rate $10^{-2}$, weight decay $10^{-4}$, and batch size $1024$.

All experiments were executed on a shared cluster with one NVIDIA A100 GPU per task (40--80\,GB VRAM), with host memory ranging between 20--90\,GB. In our measurements, runtime of individual training runs is on the order of a few seconds for the static tabular experiments and around one hour for the image experiments.

\end{document}